\documentclass[lettersize,journal]{IEEEtran}
\usepackage{amsmath,amsfonts}
\usepackage{algorithmic}
\usepackage{algorithm}
\usepackage{array}
\usepackage{textcomp}
\usepackage{stfloats}
\usepackage{url}
\usepackage{verbatim}
\usepackage{graphicx}
\usepackage{cite}
\usepackage{nicefrac}
\usepackage{amsthm}
\usepackage{multirow}
\usepackage{bm}
\usepackage{subcaption}
\usepackage{amssymb}
\usepackage{booktabs}
\usepackage{float}
\usepackage{colortbl}
\usepackage[dvipsnames]{xcolor}
\newtheorem*{remark}{Remark}
\usepackage{tikz}
\usetikzlibrary{positioning}
\usetikzlibrary{arrows.meta}
\usepackage{makecell}
\usepackage{pifont}
\newcommand{\cmark}{\ding{51}} 
\newcommand{\xmark}{\ding{55}} 

\definecolor{navy}{RGB}{0,0,128} %
\definecolor{highlightgray}{gray}{0.95}
\definecolor{headergray}{gray}{0.92}

\newcommand{\best}[1]{\textbf{\textcolor{BrickRed}{#1}}}
\newcommand{\second}[1]{\underline{\textcolor{RoyalBlue}{#1}}}

\begin{document}

\title{CoFi-UCGen: Coarse-to-Fine Unsupervised Conditional Generation without Label Priors}

 \author{Shengxi~Li,~\IEEEmembership{Member,~IEEE,}
        Zhaokun~Hu,~\IEEEmembership{Student~Member,~IEEE,}
        Ce~Zheng,
        Mai~Xu,~\IEEEmembership{Senior~Member,~IEEE,}
        Jingyuan Xia, ~\IEEEmembership{Member,~IEEE,}
        Si Liu,~\IEEEmembership{Member,~IEEE,} 
 		\IEEEcompsocitemizethanks{
    \IEEEcompsocthanksitem M. Xu and C. Zheng are the corresponding authors (e-mail: maixu@buaa.edu.cn, zhengce@buaa.edu.cn).
    \IEEEcompsocthanksitem S. Li, Z. Hu, M. Xu are with the Department of Electronic Information Engineering, Beihang University, Beijing 100191, China (e-mail: lishengxi@buaa.edu.cn,  	ZhaokunHu@buaa.edu.cn).
    \IEEEcompsocthanksitem C. Zheng is with the School of Cyber Science and Technology, Beihang University, Beijing 100191, China.
    \IEEEcompsocthanksitem J. Xia is with the College of Electronic Science, National University of Defense Technology, Hunan 410073, China (e-mail: j.xia10@nudt.edu.cn).
    \IEEEcompsocthanksitem S. Liu is with the Institute of Artificial Intelligence, Beihang University, Beijing 100191, China, (e-mail: liusi@buaa.edu.cn).
}
}

\markboth{Journal of \LaTeX\ Class Files,~Vol.~14, No.~8, August~2021}%
{Shell \MakeLowercase{\textit{et al.}}: A Sample Article Using IEEEtran.cls for IEEE Journals}


\maketitle

\begin{abstract}
Unsupervised conditional image generation (UCGen) aims to control generation without relying on manually annotated labels, yet remains challenging due to unstructured semantic representations across granularities. To address this,  we propose a novel coarse-to-fine UCGen framework (CoFi-UCGen) that explicitly disentangles global semantics from fine-grained variations, which to the best of our knowledge, sets out the first successful attempt for both coarse- and fine-grained conditional generation without any labels. More specifically, we first propose the adversarial semantic reciprocal learning theory to ensure the semantic consistency and completeness between images and latent spaces. Based on the consistency, we propose the bit-codes to learn a structured coarse-grained latent space, and further prove distinct global semantics inherent from our bit-codes while preserving independent noise sampling for generation. Building upon these bit-codes, we establish a fine-grained semantic basis and introduce a hierarchical modulation mechanism in diffusion models, by enabling layer-wise injection from coarse conditions to progressively control fine-grained attributes during generation. Extensive experiments demonstrate that without any label priors or pre-trained feature extractors, our CoFi-UCGen consistently outperforms existing UCGen methods in terms of image quality, semantic consistency, and control accuracy, verifying the effectiveness of explicit coarse-to-fine semantic decomposition for the challenging UCGen task.
\end{abstract}

\begin{IEEEkeywords}
Unsupervised conditional generation, diffusion model modulation, reciprocal learning, representation learning.
\end{IEEEkeywords}

\section{Introduction}

\IEEEPARstart{T}{he} recent advances of deep generative models (DGMs), witnessing evolutions from variational autoencoders, generative adversarial networks (GANs) and diffusion models, allowing for high-quality and realistic image generation from noisy latent space \cite{kingma2013auto, goodfellow2014generative, rombach2022high}. The primal generative models can only generate images from purely random noise, and the way to control the generation may rely on the post-processing within the latent space to demystify semantic directions \cite{harkonen2020ganspace, kwon2023diffusion, proszewska2024multi}. On the other hand, conditional deep generative models (cDGMs) facilitate controllable generation despite intrinsic stochasticity, by learning a continuous latent space conditioned on label priors from pre-defined auxiliary information \cite{mirza2014conditional, odena2017conditional, dhariwal2021diffusion, li2023neural}. The majority of cDGMs focus on the categorical conditions, in which the auxiliary information accounts for classes and attributes \cite{kaneko2017generative, lu2018attribute, hou2022conditional}, with the most recent extensions to other label forms including texts \cite{dash2017tac, liao2022text, zhang2024motiondiffuse}, styles \cite{xu2021drb, wang2023stylediffusion}, sketches \cite{chen2018sketchygan, lu2018image}, to name but a few.

\begin{figure}[t]
    \centering
    \includegraphics[width=1\linewidth]{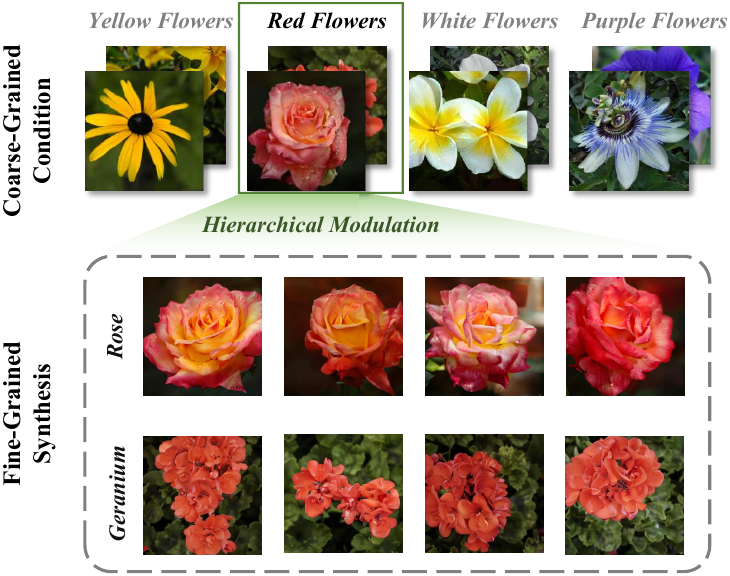}
    \caption{Illustration of our CoFi-UCGen method. \textit{The top panel} visualizes the coarse-grained semantic space, whereby images can be interpreted as coarse-grained clusters (e.g., red flowers). By anchoring on a specific coarse condition (highlighted in the green box), our model employs hierarchical modulation to guide the diffusion process. \textit{The bottom panel} demonstrates the fine-grained controllability, in which the model generates diversifying high-fidelity instances with distinct fine-grained attributes (e.g., Rose vs. Geranium), while strictly adhering to the global semantic structure defined by the coarse condition.}
    \label{fig1}
    \vspace{-2em}
\end{figure}

Despite success, training cDGMs is tightly related to the quality of conditional datasets, in which the manually annotated labels are oftentimes costly and time-consuming. The existing label information, either in coarse categorical or fine-grained text formats, may also be insufficient to summarise the semantics within an image, which typically require label balancing \cite{conditional1_1, conditional1_2, conditional1_3} and particular ways of embedding the auxiliary information \cite{li2023neural}. To address this inherent limitations of pre-defined labels, a new path route of unsupervised conditional generation (UCGen) is investigated to conditionally generate images based on underlying structures or implicit representations, instead of the ground-truth labels; this effectively avoids relying on manually annotated labels. UCGen promises additional benefits in extensive applications including domain adaptation by manipulating latent factors and multi-modal exploration from semantically structured latent spaces.

To address the absence of ground-truth labels, a straightforward strategy is to obtain pseudo-labels in advance from pre-trained self-supervised visual feature extractors, followed by training regular cDGMs via those pseudo-labels as auxiliary conditions, e.g., via SimCLR \cite{chen2020simple} or DINO \cite{caron2021emerging, oquab2023dinov2}. Despite the distinct semantics reflected by pseudo-labels, the generation randomness, arising from the latent noise, is essentially irrelevant to those pre-trained pseudo-labels. The irrelevance leads to the ambiguity in low-density or anisotropic areas for conditional generation. Several methods, by investigating latent spaces of pre-trained DGMs, can also achieve UCGen functionally \cite{harkonen2020ganspace}; this, however, typically assumes the orthogonal semantic directions from principle components, which falls short in handling fine-grained complex scenarios or high-dimensional spaces from diffusion models \cite{tzelepis2021warpedganspace, kwon2023diffusion}. Furthermore, few UCGen works aggregate the auxiliary information in an end-to-end optimisation style when training DGMs, by relying on either concatenating learnable pseudo-labels \cite{mukherjee2019clustergan, liu2020diverse} or grouping the latent space in pre-defined Gaussian mixture models (GMMs) \cite{GMMGAN, ying2021unsupervised}, thus allowing for clustered generation to accommodate distinct semantic attributes during optimization. This essentially constrains latent spaces by separate clusters, thus suffering from training instability and deficiency on generation quality. Furthermore, given limited numbers of clusters, the above methods encounter insufficient capability when representing rich and distinct semantics.

Indeed, apart from clustering the latent space, the consecutive layers from generators are inherently hierarchical \cite{karras2019style, baugan, baranchuklabel}, thus capable of reflecting varying levels of distinct semantics that govern the granularity of the finally generated images. It is thus beneficial to control the generation by carefully re-organising the latent space and generating layers, corresponding to the hierarchy from the latent space and generators. This allows us even to achieve a novel coarse-to-fine UCGen method (CoFi-UCGen) that simultaneously controls the coarse-grained and fine-grained semantics without any label priors, as illustrated in Fig. \ref{fig1}. More specifically, our CoFi-UCGen benefits from both GANs and diffusion models, namely, the compact representation from GANs for coarse semantics and detail refinement from diffusion models for fine-grained semantics. Regarding the coarse semantics, instead of the separated latent space by clusters, we first propose a novel format of bit-codes in the noisy generation space of GANs, that retains independent and identically distributed (\textit{i.i.d}) noise sampling for disentangled and complete semantics. We also prove theoretically that each bit in our bit-codes corresponds to a valid semantic direction. Different from the pre-defined pseudo-labels in existing methods, our bit-codes, seamlessly incorporated as the generating noise, are optimised from scratch by our adversarial reciprocal learning theory, capable of representing rich, accurate and distinct semantics.

To achieve fine-grained control on semantics, we further propose the hierarchical modulation generator (termed HM-UNet) under the architecture of diffusion models, which automatically learns a novel semantic basis corresponding to the generator layers upon our bit-codes. Experimental results have verified that given the absence of manual annotation, our CoFi-UCGen method is able to demystify the intrinsic labels within images, thus achieving superior performances on both quality and accuracy for unsupervised conditional generation. Besides superiority on performances, the proposed CoFi-UCGen offers unprecedented control over the synthesis process across both coarse- and fine-grained granularities. Our main contributions are three-fold:
\begin{itemize}
    \item We introduce the new bit-codes for the coarse semantics when learning UCGen, which retains \textit{i.i.d} property to establish a valid and continuous generating space.
    \item We propose the adversarial reciprocal learning theory that is able to represent rich, accurate and distinct semantics within our bit-codes, also with theoretical proofs.
    \item We propose the hierarchical modulation generation (realized by our HM-UNet) upon diffusion models, bridging the coarse bit-codes and fine-grained semantic basis for coarse-to-fine UCGen.
\end{itemize}

\begin{figure}[t]
    \centering
    \includegraphics[width=0.9\linewidth]{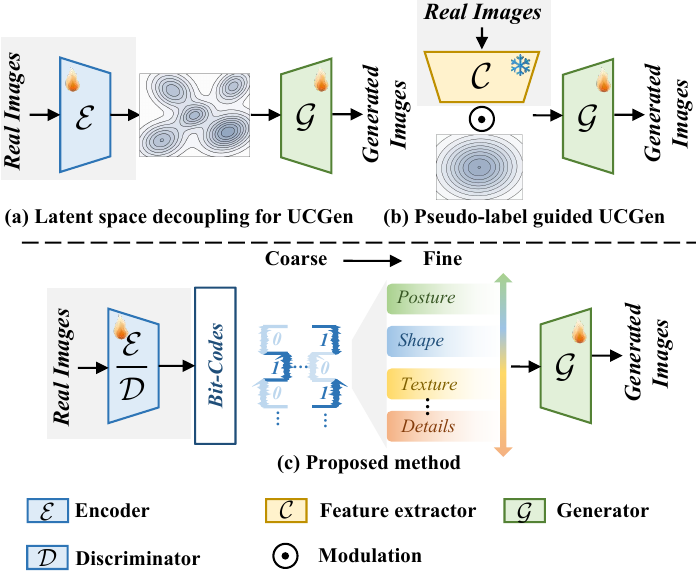}
    \caption{Illustration of existing paradigms in UCGen, against our CoFi-UCGen framework. $\mathcal{E}$ denotes the encoder, $\mathcal{G}$ represents the generator, and $\mathcal{C}$ denotes the pre-trained feature extractor. Components shaded in gray are employed only for training. Please note that the two paradigms represented by (a) and (b) can only perform coarse-grained UCGen, whilst suffering deficiency on generation quality and control accuracy due to low-density areas or irrelevant pseudo-labels. In contrast, our CoFi-UCGen framework can achieve decoupled and controllable generation with both coarse and fine granularity, achieving high-quality and accurate UCGen performances.}
    \label{fig2_paradigm}
    \vspace{-1.5em}
\end{figure}

\begin{figure*}[t]
    \centering
    \includegraphics[width=\linewidth]{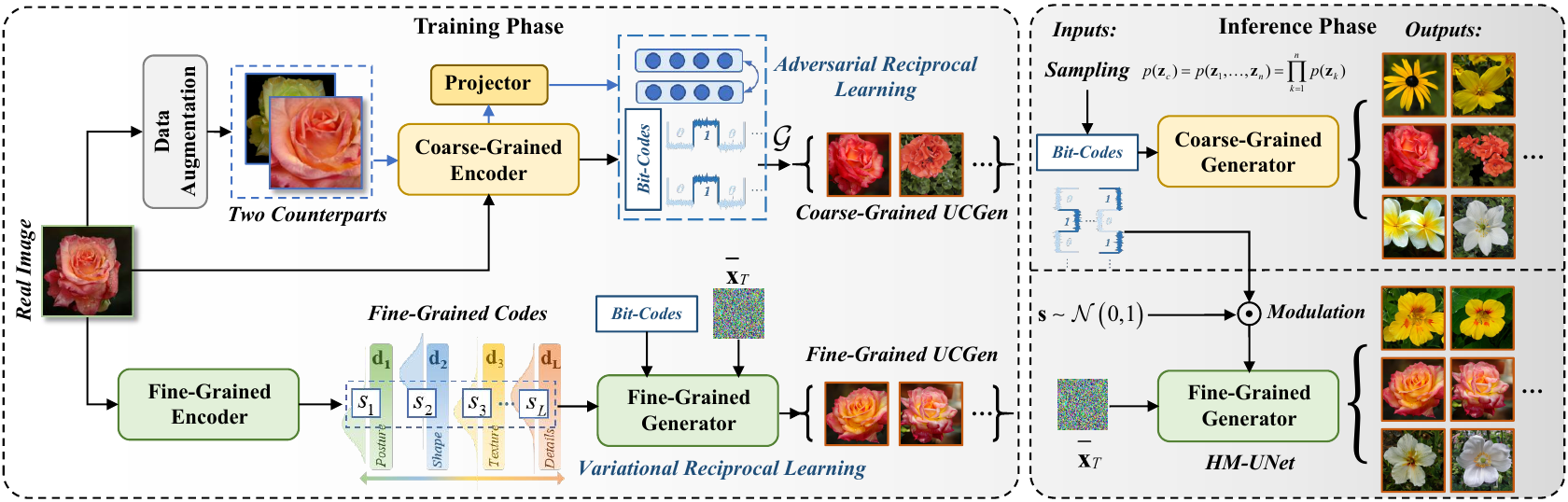}
    \caption{{Overview of the proposed CoFi-UCGen framework. At the \textit{Training Phase}, our CoFi-UCGen does not require any label prior but investigates the intrinsic semantic structures within generative models. The top panel illustrates the coarse-grained stage, in which two augmented counterparts are projected into a discriminative space, whereby semantic reciprocal learning, together with our bit-codes collaboratively optimize the coarse generator and encoder in an end-to-end style, so as to synthesize coarse-grained UCGen images. The bottom panel exhibits the fine-grained branch to establish a hierarchical modulation U-Net (namely, HM-UNet) in diffusion models. The HM-UNet is supported by our fine-grained semantic basis, and can be trained in a unified variational lower bound for diffusion models. Moreover, at the \textit{Inference Phase}, the model achieves controllable synthesis from purely randomly sampled noise and labels. Coarse images are generated by sampling bit-codes, while fine-grained generation is performed by modulating the HM-UNet with sampled fine-grained codes $s$ and bit-codes.}}
    \label{fig3_pipeline}
    \vspace{-1.5em}
\end{figure*}

\section{Related Works}
\label{sec:ref}
\subsection{Conditional Generation without Label Priors}
UCGen aims to enable controllable generative modeling without the reliance on manually annotated labels, by uncovering intrinsic structures within data. A straightforward way of achieving UCGen is to separate the latent representation as independent semantic factors or clusters when training DGMs \cite{InfoGAN, mukherjee2019clustergan, liu2020diverse, GMMGAN}. The so established semantic factors and clusters typically correspond to various semantic attributes, ensured by either maximizing the mutual information between latent codes and generated samples \cite{InfoGAN} or by enforcing clustering constraints \cite{mukherjee2019clustergan, liu2020diverse, GMMGAN}. However, imposing predefined clustering assumptions on complex data manifolds is inherently challenging and often results in training instability or posterior collapse \cite{locatello2019challenging, wang2021posterior}. Moreover, given the range of standard Gaussian noise, the numbers of clusters are also restricted, which further prevents the model from representing the rich and fine-grained semantics present in large-scale, high-resolution datasets. 

On the other hand, there are also strategies that achieve UCGen by two stages, in which pseudo-labels are first obtained via self-supervised learning frameworks, such as SimCLR \cite{chen2020simple} or DINO \cite{caron2021emerging}. Upon those pseudo-labels, existing cDGMs were employed as training backbones for conditional generation \cite{hu2023self, casanova2021instance}. Although these representations capture high-level semantic information, a fundamental mismatch persists between the deterministic pseudo-labels and the stochastic noise that alters the generation process. As a result, the generative noise is weakly coupled with the conditioning semantics, leading to ambiguity, particularly in low-density regions of the data manifold, where random variations cannot be reliably aligned with the intended semantic condition. This misalignment often leads to uncontrolled appearance variations or semantic inconsistency in the generated samples. In parallel, several recent studies have explored the use of self-supervised signals to improve unconditional generation quality. Notably, methods such as REPresentation Alignment (REPA) \cite{yu2024representation} aim to enhance convergence and fidelity by aligning noisy hidden states with clean representation targets. While these approaches also leverage semantic representations, they primarily focus on image quality rather than the conditional control, thus lacking explicit mechanisms for semantic modulation when achieving UCGen. As illustrated in Fig.~\ref{fig2_paradigm}, distinct from the paradigms of learning clusters or using pseudo-labels,  we explicitly construct a structured latent space with binary codes through a reciprocal learning strategy, which can effectively encode real-world semantics. By enforcing an invertible mapping between generation and encoding, the proposed framework establishes a strong semantic correspondence between latent variables and generated content, effectively resolving the irrelevance issue observed in pseudo-labeling-based conditioning methods.

\subsection{Latent Space Structuring and Modulation}
Another way capable of achieving UCGen is to post-process the latent space given pre-trained DGMs, by either post-analysis or discrete quantization techniques. On the one hand, the post-analysis methods, such as GANSpace \cite{harkonen2020ganspace} and SeFa \cite{shen2021closed}, perform semantic exploration on the latent spaces of pre-trained GANs or diffusion models to identify interpretable directions. While these approaches can uncover meaningful semantic variations, they remain passive discovery techniques constrained by the fixed and often entangled representations learned from pre-trained models. In addition, they predominantly rely on linear operations, such as principal component analysis, implicitly assuming that semantic attributes correspond to orthogonal directions. This assumption is insufficient for modeling the complex, nonlinear feature hierarchies required for fine-grained control, particularly in high-dimensional latent spaces \cite{tzelepis2021warpedganspace, kwon2023diffusion}.

On the other hand, the discrete latent representations, including VQ-VAE \cite{van2017neural} and VQ-GAN \cite{esser2021taming},  quantize the latent space using a finite codebook. Although discretization introduces a strong structural prior and improves representation compactness, this operation fundamentally violates the independent and identically distributed (\textit{i.i.d.}) assumption of the sampling noise. Consequently, these models typically require complex autoregressive or transformer-based decoders for generation, which complicates sampling and prevents smooth interpolation in the latent space. However, we propose a unified framework that integrates structured latent representations with the expressive power of diffusion models. Unlike VQ-based discretization, the structured latent space preserves the \textit{i.i.d.} noise property, thereby maintaining a continuous generation manifold that enables stable sampling and smooth semantic interpolation. The overall process is optimized in an end-to-end manner for UCGen, instead of post-analysis on fixed generating spaces. More importantly, 
by introducing a hierarchical modulation mechanism within the diffusion process, our CoFi-UCGen network performs layer-wise modulation using coarse global bit-codes and fine-grained local features, thus to the best of our knowledge, achieving the first coarse-to-fine control capability that is not attainable within existing global conditioning strategies.

\section{Methodology}
We introduce the proposed CoFi-UCGen in detail in this section, which is able to achieve UCGen without any label priors in both training and inference stages. The overall framework of our CoFi-UCGen is illustrated in Fig. \ref{fig3_pipeline}, in which the semantic reciprocal learning in Section \ref{method:recip} acts as the prerequisite to ensure semantic consistency when training our proposed CoFi-UCGen. We then introduce the coarse-grained framework in Section \ref{method:coarse_grained}, followed by the fine-grained UCGen in Section \ref{method:fine_grained}. Please note that during the inference stage, we are able to generate conditional images, by purely sampling from random noise and labels to achieve both coarse- and fine-grained UCGen, without the need of input real-world images.

\subsection{Semantic Reciprocal Learning}
\label{method:recip}
The real-world high-dimensional images essentially reside on low-dimensional semantic manifold, on which the latent spaces $\mathcal{Z}$ of DGMs naturally possess high-level semantics \cite{kingma2013auto,arjovsky2017towards,doersch2016tutorial,rombach2022high}. Such latent spaces therefore provide a structured and semantically meaningful representation for real-world images. The majority of DGMs inherently operate in a forward-only manner, mapping latent codes to images without explicitly constraining the inverse mapping. However, achieving UCGen essentially requires to  represent semantics within images, the prerequisite to the follow-up optimization towards the distinct semantics. Therefore, the latent generating space of DGMs provides a well-defined foundation on the richness of semantics, once we are able to invert back to the latent space. Thus, to fully exploit the semantic structure of $\mathcal{Z}$ for downstream coarse-to-fine conditional generation, we introduce the \textit{semantic reciprocal learning}, a general self-supervised framework that enforces cycle semantic consistency between the latent space and the image domain.

We may need to point out that different from existing inversion of DGMs \cite{xia2022gan, roich2022pivotal, mokady2023null} that typically operates upon fixed (or slightly fine-tuned) generators, our reciprocal learning optimises in conjunction between the encoder and the generator, thus capable of improving the latent space for semantic completeness. In contrast, existing DGM inversion can only locate the semantics, without the ability of altering the latent space. More importantly, instead of regularising the similarity between images, we essentially aim to reconstruct the semantics within the latent spaces. In other words, the generator $\mathcal{G}$ in our reciprocal learning framework first synthesizes images from random latent codes, and then the encoder $\mathcal{E}$ maps images back to the same latent codes, by reconstructing semantic representations. Therefore, our semantic reciprocal learning does not require these two components to be symmetric or invertible; instead, it enforces semantic consistency under composition. As shall be introduced shortly in Sections \ref{method:coarse_grained} and \ref{method:fine_grained}, our semantic reciprocal learning acts as the cores for both coarse and fine-grained UCGen when encoding the latent spaces in both GANs and diffusion models.

More specifically, given a latent code $\mathbf{z}$ sampled from the structured latent space $\mathcal{Z}$, the generator produces an image $\mathcal{G}(\mathbf{z})$, which is then re-encoded by $\mathcal{E}$. Since the latent space $\mathcal{Z}$ inherently captures meaningful semantic structure, discrepancies measured in $\mathcal{Z}$ provide a more informative signal than pixel-wise reconstruction errors in the image domain. In particular, enforcing consistency in latent space directly constrains the generator to preserve semantic attributes encoded in $\mathbf{z}$. Accordingly, we define the semantic reciprocal learning objective as the $l_2$ distance between the original latent code and its re-encoded counterpart, given by
\begin{equation}
r_{z} = | \mathcal{E}(\mathcal{G}(\mathbf{z})) - \mathbf{z} |_2^2, ~\mathbf{z}\in\mathcal{Z}
\label{eq:recip_loss}
\end{equation}
which encourages semantic alignment between generation and encoding whilst avoiding explicit reconstruction constraints in the image space.

Please note that $\mathbf{z}\in\mathcal{Z}$ is randomly sampled from the latent noise space, which allows the reciprocal loss $r_z$ to be optimized without requiring paired real-world images. By minimizing \eqref{eq:recip_loss}, the encoder $\mathcal{E}$ is encouraged to produce latent representations that are consistent with the structured latent space $\mathcal{Z}$ when applied to generate samples. This constraint regularizes $\mathcal{E}$ to align its output distribution with the latent codes employed by the generator, thereby facilitating stable semantic correspondence between encoding and generation.
\begin{remark}
When $\mathcal{G}$ generates images that lie close to the support of real data, the reciprocal learning objective implicitly enables $\mathcal{E}$ to extract fine-grained semantic representations from real-world images, despite the reciprocal loss being optimized using latent codes $\mathbf{z}$. This effect arises from distributional alignment rather than explicit reconstruction or invertibility assumptions.
\end{remark}

We further provide in Lemma \ref{lemma:info}  that our reciprocal learning essentially maximises the mutual information between the latent features and generated images, thus promoting the controllability for our unsupervised conditional generation. We may also need to point out that InfoGAN \cite{InfoGAN} also proposes to maximise the mutual information by variational networks. However, in contrast to InfoGAN, we achieve this by implicitly maximizing $ I(\mathbf{z}; \mathcal{G}(\mathbf{z}))$ through the reciprocal difference $r_z$, without the need for additional variational networks. Indeed, our motivation by obtaining representations is fundamentally different from InfoGAN that relies on coarse-grained pseudo-labels, in which the proposed reciprocal difference coincides with maximising the mutual information. 

\newtheorem{lemma}{Lemma}
\begin{lemma}
\label{lemma:info}
Assume the encoder $\mathcal{E}$ performs a noisy estimation of the latent codes $\mathbf{z}$ by:
    \begin{equation}
    q_{\phi}(\mathbf{z} | \mathcal{G}(\mathbf{z})) = \mathcal{N}(\mathbf{z};\mathcal{E}(\mathcal{G}(\mathbf{z})), \sigma^2\mathbf{I})
    \label{estimation_enc}
    \end{equation}
where $\phi$ is the parameter of $\mathcal{E}$. Then, the mutual information between the latent features and generated images $I(\mathbf{z}; \mathcal{G}(\mathbf{z}))$ satisfies the following inequality regarding our reciprocal difference $r_z$:
    \begin{equation}
    I(\mathbf{z}; \mathcal{G}(\mathbf{z})) \geq -\frac{1} {2\sigma^2}\| \mathcal{E}(\mathcal{G}(\mathbf{z})) - \mathbf{z} \|_2^2 + \mathrm{const},
    \label{lowerbound_recip}
    \end{equation}
where $\mathrm{const}$ denotes constant irrelevant to $\mathbf{z}$.
\end{lemma}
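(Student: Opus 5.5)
The plan is to use the standard Barber--Agakov variational lower bound on mutual information, with the encoder-induced Gaussian \eqref{estimation_enc} serving as the variational posterior. Write $\mathbf{x}=\mathcal{G}(\mathbf{z})$, and let $p(\mathbf{z})$ be the (fixed) prior on $\mathcal{Z}$ and $p(\mathbf{z}\mid\mathbf{x})$ the true posterior induced by the generator. First decompose the mutual information as
\begin{equation*}
I(\mathbf{z};\mathbf{x}) = H(\mathbf{z}) - H(\mathbf{z}\mid\mathbf{x}) = H(\mathbf{z}) + \mathbb{E}_{p(\mathbf{z},\mathbf{x})}\left[\log p(\mathbf{z}\mid\mathbf{x})\right].
\end{equation*}
Because the latent prior is fixed by design (i.i.d.\ sampling), $H(\mathbf{z})$ does not depend on $\mathcal{G}$ or $\mathcal{E}$ and will be absorbed into the constant.

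Next, insert the variational distribution $q_\phi$:
\begin{equation*}
\mathbb{E}\left[\log p(\mathbf{z}\mid\mathbf{x})\right] = \mathbb{E}\left[\log q_\phi(\mathbf{z}\mid\mathbf{x})\right] + \mathbb{E}_{p(\mathbf{x})}\left[\mathrm{KL}\left(p(\mathbf{z}\mid\mathbf{x})\,\|\,q_\phi(\mathbf{z}\mid\mathbf{x})\right)\right].
\end{equation*}
The KL term is nonnegative, so dropping it gives $I(\mathbf{z};\mathbf{x})\ge H(\mathbf{z})+\mathbb{E}[\log q_\phi(\mathbf{z}\mid\mathbf{x})]$.

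Then substitute the isotropic Gaussian form \eqref{estimation_enc}. Its log-density is
\begin{equation*}
\log q_\phi(\mathbf{z}\mid\mathbf{x}) = -\tfrac{1}{2\sigma^2}\|\mathcal{E}(\mathbf{x})-\mathbf{z}\|_2^2 - \tfrac{d}{2}\log(2\pi\sigma^2),
\end{equation*}
where $d$ is the latent dimension. Collecting $H(\mathbf{z})$ and $-\tfrac{d}{2}\log(2\pi\sigma^2)$ into $\mathrm{const}$ gives exactly \eqref{lowerbound_recip}, with the right-hand side being $-\tfrac{1}{2\sigma^2}r_z$ up to a constant.

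The main obstacle is interpretive rather than technical. The mutual information is a population quantity, while the stated bound is written pointwise in $\mathbf{z}$. I would state the inequality with an expectation $\mathbb{E}_{\mathbf{z}\sim p(\mathbf{z})}$ over the reciprocal term, which is how \eqref{lowerbound_recip} should be read, since $r_z$ is minimized in expectation during training.

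A second, related caveat concerns $\mathbf{x}=\mathcal{G}(\mathbf{z})$. If $\mathcal{G}$ is deterministic and $\mathbf{z}$ is continuous, $I$ may be infinite, in which case the bound holds trivially. To keep the quantities finite I would treat $\mathcal{G}$ as possibly stochastic, or regard the joint as having densities. Either way, the inequality direction is unaffected.
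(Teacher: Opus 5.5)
Your proposal is correct and follows essentially the route the paper's argument points to (the supplementary proof is not reproduced in the text): the Barber--Agakov variational bound $I(\mathbf{z};\mathcal{G}(\mathbf{z})) \ge H(\mathbf{z}) + \mathbb{E}[\log q_\phi(\mathbf{z}\mid\mathcal{G}(\mathbf{z}))]$, with the Gaussian encoder as the variational posterior in place of InfoGAN's auxiliary network, and $H(\mathbf{z})$ plus the Gaussian normalizer absorbed into the constant. Reading the bound in expectation over $\mathbf{z}$, and noting that a deterministic $\mathcal{G}$ can make the inequality trivially true, are appropriate clarifications of the statement as written rather than gaps.
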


\begin{proof}
For detailed proof, please refer to Section S-I in the Supplementary Material.
\end{proof}

\subsection{Coarse Semantics by Adversarial Bit-Codes Learning}
\label{method:coarse_grained}
\begin{figure}[t]
    \centering
    \includegraphics[width=\linewidth]{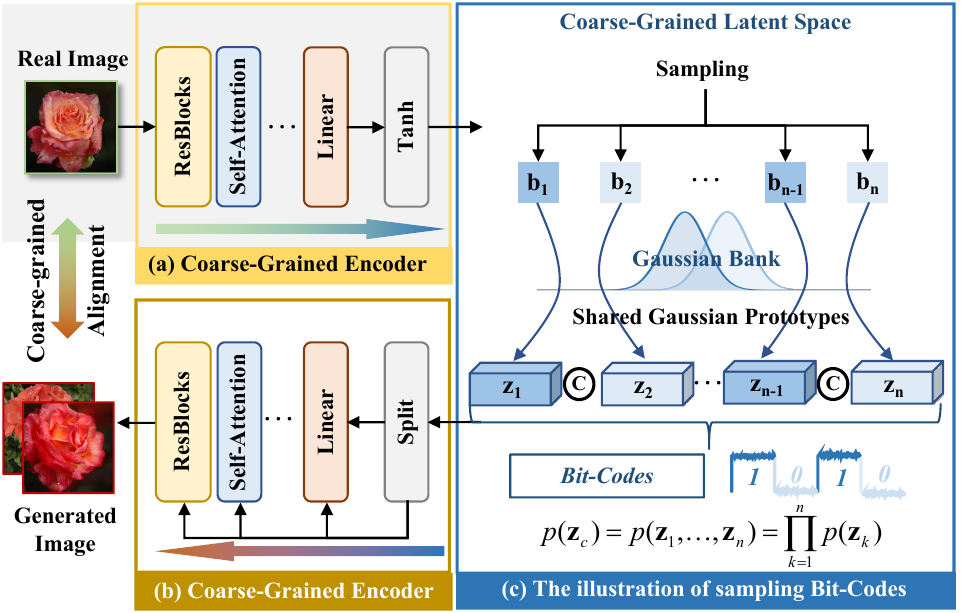}
    \caption{Overview of the training phase of coarse-grained framework in our CoFi-UCGen. Note that real images and the encoder (shaded in gray) are utilized exclusively during training to derive self-supervised semantic conditions, and are discarded during inference. (a) The coarse-grained encoder maps a real image to a compact latent bit-codes through stacked coarse-grained encoder residual blocks and self-attention layers. (b) The coarse-grained generator aggregates the the bit-codes to synthesize UCGen images via corresponding coarse-grained generator residual blocks, enabling coarse-level semantic alignment between real and generated samples. (c) The proposed bit-code ensures consistent and distinct semantics during learning, theoretically proved by \textit{i.i.d} factorized latent segments to disentangle coarse-grained semantics by each bit.}
    \label{fig_framework1}
    \vspace{-1em}
\end{figure}
In this section, we further propose an adversarial semantic reciprocal learning strategy, together with the new format of bit-codes for coarse-grained UCGen; this also serves as the foundation for our hierarchical generation framework. The overall framework for our coarse-grained UCGen is illustrated in Fig.~\ref{fig_framework1}, whereby our methodology is essentially the synergy between a rigid \textit{latent structure} and a robust \textit{learning objective}. First, we introduce the \textit{bit-codes}, a structured probabilistic prior designed to impose explicit semantic disentanglement. By strictly partitioning the latent space into independent segments, we prove that bit-codes provide the necessary structural scaffolding, ensuring that distinct coarse attributes are represented without statistical entanglement. Then, to integrate this prior into the real-world image distribution, we leverage an adversarial reciprocal learning strategy, which achieves complete, consistent and distinct semantics when training DGMs in an end-to-end style. In other words, while the bit-codes alter the topological organization of the semantics, the adversarial learning guides the optimization to enforce a rigorous distributional alignment between the encoded real images and the structured prior. 

Therefore, the proposed semantic reciprocal learning strategy essentially ensures latent space semantic consistency between generation and encoding. Upon this, we further impose an explicit coarse-grained structure when training DGMs, so as to enable the latent space with the capability of automatically representing distinct semantics. This is achieved by our newly proposed \textit{bit-codes}, which are structured into independent segments, with each segment corresponding to a distinct coarse-grained semantic attribute. In other words, the bit-codes aim to encode global semantic attributes to provide a stable and structured representation of coarse-grained semantics, whilst ignoring fine-grained variations. As shall be introduced in Section~\ref{method:fine_grained}, the coarse-grained latent representation acts as the semantic anchor for the hierarchical modulation in diffusion models, in which the disentangled fine-grained semantics are established by a novel semantic basis.

 \begin{figure}[t]
    \centering
    \includegraphics[width=0.95\linewidth]{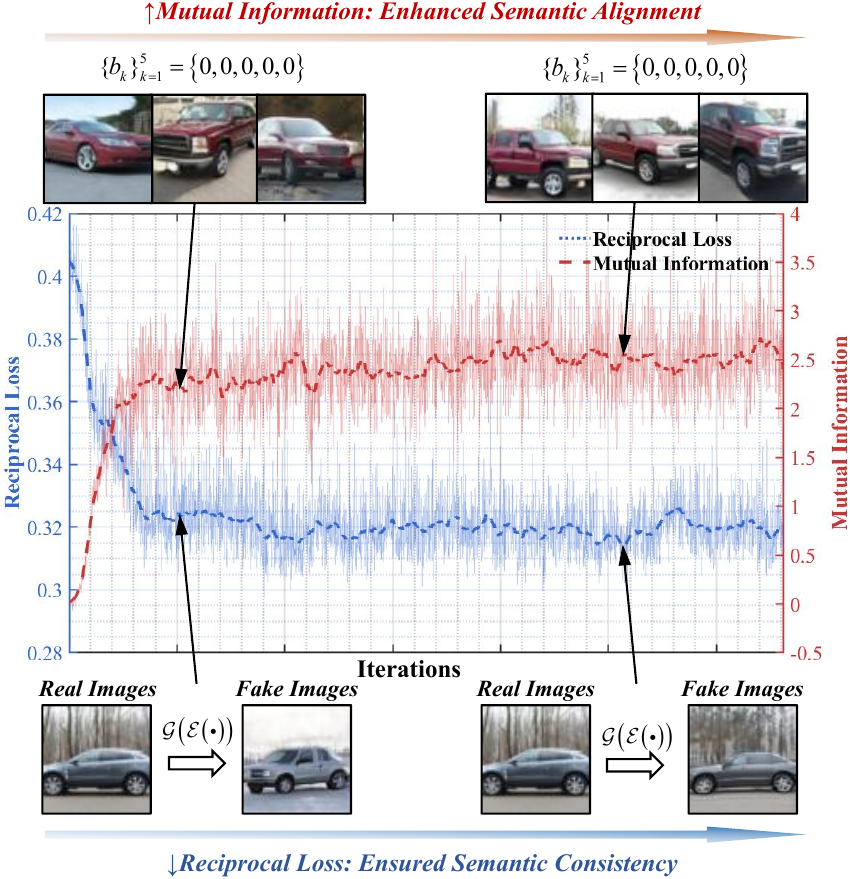}
    \caption{Illustration on the relationship between reciprocal alignment and mutual information. The reciprocal discrepancy $r_z$ (blue) and the estimated mutual information (red) are plotted over training iterations, against Stanford Cars dataset. \textit{Visual snapshots exemplify the physical interpretation:} As the reciprocal loss decreases, the semantic alignment between real inputs and their reconstructions improves (bottom pairs). Conversely, the increase in mutual information correlates with enhanced semantic consistency among samples generated from identical bits $\{b_k\}_{k=1}^{5}$ (top pairs). These empirical observations corroborate our theoretical findings in Lemma~\ref{lemma:info}.}
    \label{fig_recip&mi}
    \vspace{-2em}
\end{figure}

More specifically, let $\mathbf{z}_c \in \mathbb{R}^d$ denote the coarse-grained latent codes produced by the encoder $\mathcal{E}$ in the reciprocal learning framework. We represent $\mathbf{z}_c$ by $n$ segments:
\begin{equation}
    \mathbf{z}_c = [\mathbf{z}_1, \mathbf{z}_2, \dots, \mathbf{z}_k, \dots, \mathbf{z}_n]^\top, 
    \quad \mathbf{z}_k \in \mathbb{R}^{\nicefrac{d}{n}},
    \label{eq:sements}
\end{equation}
where $d$ is assumed to be divisible by $n$. Conceptually, each segment $\mathbf{z}_k$ is designed to encode one coarse semantic factor, and all segments are encouraged to be statistically independent at the prior level. This structured prior is aligned with image-level semantics via adversarial training and reciprocal mapping.

Then, instead of sampling $\mathbf{z}_c$ from either a single Gaussian or a generic Gaussian mixture, we introduce a \emph{bit-code latent prior}. We denote the total number of coarse-grained semantics by $C$, whereas $c \in \{0,\dots,C-1\}$ represents the bit-codes associated with a sample. We encode $c$ into a binary vector $(b_1,\dots,b_n)$ via
\begin{equation}
    b_k = \left\lfloor \frac{c}{2^{k-1}} \right\rfloor \bmod 2, 
    \quad k = 1,2,\dots,n,
    \label{eq:binary_gaussian}
\end{equation}
where $\lfloor \cdot \rfloor$ is the floor operator. The bit $b_k \in \{0,1\}$ acts as a deterministic selector that chooses between two Gaussian modes for the $k$-th segment:
\begin{equation}
    \mathbf{z}_k \mid b_k \sim \mathcal{N}\big( (-1)^{b_k}\bm{\mu}, \,\sigma_z^2\mathbf{I}\big),
    \label{eq:segment_gaussian}
\end{equation}
where $\bm{\mu} \in \mathbb{R}^{\nicefrac{d}{n}}$ controls the separation between the two semantic modes associated with $b_k=0$ and $b_k=1$, and $\sigma_z^2$ controls the intra-mode variability. In this way, the discrete variables $\{b_k\}$ select coarse semantic modes, while the Gaussian noise within each segment preserves continuous and fine-grained variations. This hybrid construction matches our coarse-to-fine objective, namely, bit-codes structure for coarse controllability, and continuous variability for the follow-up fine-grained synthesis.

Under the above construction, the joint distribution of the latent bit-codes $\mathbf{z}_c$ admits a factorized form. In particular, when the pseudo-codes $c$ are sampled uniformly from the binary code space, the resulting segments $\{\mathbf{z}_k\}$ become \textit{i.i.d}, which is ensured by the following Lemma~\ref{lemma: attribute_ortho}. 


\begin{lemma}
    \label{lemma: attribute_ortho}
    Suppose $C = 2^{n}$ and pseudo-codes $c$ are drawn uniformly from $\{0,\dots,C-1\}$. Given $\{\mathbf{z}_k\}_{k=1}^n$ in \eqref{eq:sements} from $(b_1,\dots,b_n)$ defined in \eqref{eq:segment_gaussian}, the segments $\{\mathbf{z}_k\}_{k=1}^n$ are i.i.d random vectors and satisfy
    \begin{equation}
        p(\mathbf{z}_c) = p(\mathbf{z}_1,\dots,\mathbf{z}_n) 
        = \prod_{k=1}^{n} p(\mathbf{z}_k),
        \label{eq:lemma2}
    \end{equation}
    where each $\mathbf{z}_k$ follows the mixture distribution induced by  \eqref{eq:segment_gaussian}.
\end{lemma}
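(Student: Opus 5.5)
The argument has two steps. First, uniform sampling of $c$ makes the bits $b_1,\dots,b_n$ i.i.d.\ Bernoulli$(1/2)$. Second, the segment noise is conditionally independent given the bits. Combining the two and marginalising out the bits gives the factorisation.

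\emph{Step 1 (bits).} The map $c\mapsto(b_1,\dots,b_n)$ in \eqref{eq:binary_gaussian} is the binary expansion of $c$. Since $C=2^n$, it is a bijection between $\{0,\dots,C-1\}$ and $\{0,1\}^n$. Under the uniform law on $c$, every bit pattern therefore has probability $2^{-n}=\prod_{k=1}^n \tfrac12$. Summing over the other coordinates shows that each marginal is $P(b_k=1)=\tfrac12$. Hence the joint pmf equals the product of the marginals, so the $b_k$ are i.i.d.\ Bernoulli$(1/2)$.

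\emph{Step 2 (conditional structure).} Make explicit the modelling assumption implicit in \eqref{eq:segment_gaussian}. Given the bits, each segment is generated as $\mathbf{z}_k=(-1)^{b_k}\bm{\mu}+\sigma_z\bm{\epsilon}_k$, where the $\bm{\epsilon}_k\sim\mathcal{N}(\mathbf{0},\mathbf{I})$ are independent of each other and of $c$. Consequently
\begin{equation*}
p(\mathbf{z}_1,\dots,\mathbf{z}_n\mid b_1,\dots,b_n)=\prod_{k=1}^n \mathcal{N}\big(\mathbf{z}_k;(-1)^{b_k}\bm{\mu},\sigma_z^2\mathbf{I}\big),
\end{equation*}
and the $k$-th factor depends only on $b_k$.

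\emph{Step 3 (marginalise).} Write the joint density as a sum over $\mathbf{b}\in\{0,1\}^n$ of $p(\mathbf{b})\,p(\mathbf{z}_c\mid\mathbf{b})$. Insert the product form from Step 1 for $p(\mathbf{b})$ and the product form from Step 2 for $p(\mathbf{z}_c\mid\mathbf{b})$. Because each summand is a product of terms with the $k$-th term depending only on $b_k$, the sum over $\{0,1\}^n$ distributes over the product:
\begin{equation*}
p(\mathbf{z}_c)=\prod_{k=1}^n\Big(\tfrac12\mathcal{N}(\mathbf{z}_k;\bm{\mu},\sigma_z^2\mathbf{I})+\tfrac12\mathcal{N}(\mathbf{z}_k;-\bm{\mu},\sigma_z^2\mathbf{I})\Big).
\end{equation*}
Integrating out all segments except $\mathbf{z}_k$ shows that the $k$-th factor is exactly the marginal $p(\mathbf{z}_k)$, which is the mixture induced by \eqref{eq:segment_gaussian}. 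This establishes \eqref{eq:lemma2}. Since $\bm{\mu}$ and $\sigma_z$ are shared across all $k$, these marginals are identical, so the segments are independent and identically distributed.

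The only delicate point is Step 2. The statement specifies only the per-segment conditionals in \eqref{eq:segment_gaussian} and does not say that the Gaussian noises of different segments are independent. I would state this assumption explicitly as part of the sampling construction. Without it, or without $C=2^n$, the factorisation can fail: if $C<2^n$, for example, the bits become correlated and the product form breaks. Everything else is direct computation.
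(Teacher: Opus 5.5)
Your proof is correct and follows essentially the same route as the paper. For $C=2^n$ the binary expansion is a bijection onto $\{0,1\}^n$, so the bits are i.i.d.\ Bernoulli$(1/2)$; with the conditional independence of the segment noises (which you were right to make explicit), the sum over $\{0,1\}^n$ distributes over the product and gives identical two-component mixtures.

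One small correction to your closing aside: if $C=2^m$ with $m<n$, the top $n-m$ bits are constant, so the segments stay independent and only the identical-distribution claim fails. The bits lose independence exactly when $C$ is not a power of two.
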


\begin{proof}
For detailed proofs, please refer to Section S-II in the Supplementary Material.
\end{proof}

More importantly, as shall be provided by Theorem \ref{thm:structured_cov}, our bit-codes design can in principle encode an independent semantic factor during training, and thus provides a desiring latent space structure to align with image semantics.

\newtheorem{theorem}{Theorem}
\begin{theorem}
\label{thm:structured_cov}
Suppose $C = 2^{n}$ and pseudo-codes $c$ are drawn uniformly from $\{0,\dots,C-1\}$. Given $\mathbf{z}_c=\{\mathbf{z}_k\}_{k=1}^n$ in \eqref{eq:sements} from $(b_1,\dots,b_n)$ defined in \eqref{eq:segment_gaussian}, the covariance matrix of $\mathbf{z}_c$ admits an orthonormal basis of $n$ leading eigenvectors $\{\mathbf{v}_k\}_{k=1}^n$, each of which exhibits a segment-wise localized structure:
\begin{equation}
    \mathbf{v}_k = [\mathbf{0}^\top, \dots, \underbrace{\hat{\bm{\mu}}^\top}_{\text{Corresponding to }\mathbf{z}_k}, \dots, \mathbf{0}^\top]^\top,
    \label{eq:disentangled_eigenvectors}
\end{equation}
where $\hat{\bm{\mu}}=\nicefrac{\bm{\mu}}{||\bm{\mu}||_2}$ given \eqref{eq:segment_gaussian}; $\mathbf{v}_k$ is an almost zero-valued vector, except for those dimensions that correspond to $\mathbf{z}_k$.

\begin{proof}
For detailed proofs, please refer to Section S-III in the Supplementary Material.
\end{proof}
\end{theorem}

\noindent{\textbf{Remark:} \textit{Given the continuous latent generating space, its eigenspace has been verified to possess rich  semantics \cite{harkonen2020ganspace, kwon2023diffusion}. By our bit-code design, the leading eigenvectors are locally confined within segments, in which modulating the generate process using specific segments $\mathbf{z}_k$ can effectively manipulate orthogonal semantic directions in the data manifold, thus avoiding the attribute leakage.}}

Based on the proposed bit-codes, we are able to use the semantic reciprocal learning strategy as introduced in Section \ref{method:recip}, such that the semantics of real-world images can be represented with distinct attributes. As illustrated in Fig. \ref{fig_recip&mi}, we can clearly observe that with the optimisation on reciprocal learning, our bit-codes can better control the generated semantics, the nature outcome of the maximised mutual information as also proved by Lemma \ref{lemma:info}. In other words, each segment $\mathbf{z}_k$ can encode a distinct semantic attribute without being statistically entangled with the others. Under mild conditions, the covariance of $\mathbf{z}_c$ is block-diagonal with identical blocks, and that the top $n$ eigenvectors of the covariance matrix align with the mean-shift directions induced by $\{b_k\}$. These eigenvectors provide $n$ disentangled linear directions in the latent space, which our semantic reciprocal learning can effectively exploit to align coarse latent factors with image-level semantics. In contrast, existing methods that rely on GMMs \cite{GMMGAN, ying2021unsupervised} essentially cluster all the latent dimensions, such that semantic attributes are encoded in a coupled style. This leads to altering one semantic direction may impact the other semantics. However, our bit-codes force each binary selector $b_k$ to control only the corresponding segment $\mathbf{z}_k$, which ensures a one-to-one correspondence between binary codes and segment-wise mean shifts, providing interpretable and robust control on coarse-grained semantics.

After establishing the distinct semantics within the latent space, the remaining task is to optimize DGMs that are capable of generating images from our bit-codes, ensuring the richness and completeness of semantics. Supported by our semantic reciprocal learning mechanism, we propose to optimize the generation by minimising the distribution discrepancy of semantics between the generated $\mathbf{\widetilde{x}}\in\bm{\mathcal{\widetilde{X}}}$ and real-world images $\mathbf{x}\in\bm{\mathcal{X}}$, i.e., the distribution discrepancy between $\mathcal{E}(\widetilde{\mathbf{x}})$ and $\mathcal{E}(\mathbf{x})$. This essentially aligns the structured latent space $\mathbf{z}_c$ with the image distribution in a principled and stable manner, whereby we adopt the characteristic function (CF) based adversarial training framework \cite{li2022reciprocal}, thus constituting the adversarial semantic reciprocal learning. 

{More specifically, besides the semantic reciprocal learning, we further optimize the distribution discrepancy between $\mathcal{E}(\widetilde{\mathbf{x}})$ and $\mathcal{E}(\mathbf{x})$. Instead of matching distributions in pixel space, we match the distributions of encoder features via a characteristic function discrepancy, which has been proved to provide a stable and expressive measure of distributional mismatch. More specifically, given the image distribution $\bm{\mathcal{X}}$, we are able to calculate the CF $\Phi_{\bm{\mathcal{X}}}(\mathbf{w})$, where $\mathbf{w}$ is the new variables related to CF. A desiring property is that the CF $\Phi_{\bm{\mathcal{X}}}(\mathbf{w})$ possesses one-to-one correspondence to the probabilistic distribution. Therefore, we are able to minimise the upper bound of CF difference, by adversarially finding the ``worst'' $\mathbf{w}$ between $\Phi_{\bm{\mathcal{X}}}(\mathbf{w})$ and $\Phi_{\bm{\mathcal{\widetilde{X}}}}(\mathbf{w})$, given by:
\begin{equation}
\min_{\mathcal{G}}\ \max_{\mathcal{E},\,\mathbf{w}} \ 
C_{\mathcal{W}}\big(\mathcal{E}(\mathbf{x}),\, \mathcal{E}(\widetilde{\mathbf{x}})\big).
\label{eq:minmax_cf}
\end{equation}
where $\mathbf{\widetilde{\mathbf{x}}}=\mathcal{G}(\mathbf{z}_c)$ and $C_{\mathcal{W}}(\cdot,\cdot)$ is the CF discrepancy:
\begin{equation}
	\begin{aligned}
		C_{\mathcal{W}}&(\mathbf{x}, \widetilde{\mathbf{x}}) =\\
		&\int_{\mathbf{w}} \!\!\left( (\Phi_{\mathbf{x}}(\mathbf{w}) - \Phi_{\widetilde{\mathbf{x}}}(\mathbf{w}))(\Phi_{\mathbf{x}}^*(\mathbf{w}) - \Phi_{\widetilde{\mathbf{x}}}^*(\mathbf{w})) \right)^{\frac{1}{2}} \!dF_{\mathcal{W}}(\mathbf{w}),
	\end{aligned}
	\label{eq:CF}
\end{equation}
In this way, by minimising the upper bound $\mathcal{L}_{\mathcal{E}}$, we are able to effectively minimise the CF discrepancy, thus optimizing DGMs to generate images that match the real-world image distribution. }

To further improve the distinction between semantics for UCGen, we improve existing SimCLR~\cite{chen2020simple}, by adopting an augmentation-based contrastive learning strategy to learn invariant and discriminative representations from unlabelled images. Unlike SimCLR that employs a dedicated projection network for contrastive learning and a separate encoder for downstream tasks, our encoder $\mathcal{E}$ is jointly optimized for both targets, namely, the adversarial semantic reciprocal learning and the feature extractor for contrastive learning. 

Concretely, for each real image $\mathbf{x}_i$, we sample two independent augmentations
\begin{equation}
        \hat{\mathbf{x}}_i,\ \hat{\mathbf{x}}_i^+ \sim \mathcal{T}(\mathbf{x}_i),
\end{equation}
where $\mathcal{T}$ denotes a standard stochastic augmentation pipeline (e.g., random cropping and colour jittering). The augmented views are encoded by $\mathcal{E}$ and passed through a lightweight projector $\mathcal{P}$ to obtain contrastive features:
\begin{equation}
    \mathbf{y}_i = \mathcal{P}\big(\mathcal{E}(\hat{\mathbf{x}}_i)\big), 
    \quad 
    \mathbf{y}_i^+ = \mathcal{P}\big(\mathcal{E}(\hat{\mathbf{x}}_i^+)\big).
\end{equation}
Upon this, we regard $(\mathbf{y}_i, \mathbf{y}_i^+)$ as a positive pair and
$(\mathbf{y}_i, \mathbf{y}_j)$ where $j\neq i$ as negative pairs. The contrastive loss is then defined as
\begin{equation}
    \mathcal{L}_{\text{aug}} 
    = -\sum_{i} 
    \log 
    \frac{
        \exp\big( \mathrm{sim}(\mathbf{y}_i, \mathbf{y}_i^+) / \tau \big)
    }{
        \sum_{j: j\neq i} \exp\big( \mathrm{sim}(\mathbf{y}_i, \mathbf{y}_j) / \tau \big)
    },
    \label{eq: contrastive loss}
\end{equation}
where $\mathrm{sim}(\cdot,\cdot)$ denotes a similarity measure (e.g., cosine similarity) and $\tau>0$ is a temperature parameter, which further regularise the encoder to accurately capture the distinct semantics. Therefore, different from existing methods that contrastively augment real-world and generated images \cite{kang2020contragan, jeong2021training}, our contrastive loss operates \emph{exclusively} on real-world images, which, on the one hand, prevents from the generator $\mathcal{G}$ from being contaminated by low-quality augmentation, especially during early training stages. On the other hand, the encoder $\mathcal{E}$ is explicitly regularized to produce invariant and discriminative representations for distinct semantics. As the complementary for our adversarial semantic reciprocal learning, the encoder thus learns latent bit-codes where each segment is encouraged to form a coherent semantic factor in the coarse manner.

\begin{figure*}[t]
    \centering
    \includegraphics[width=0.95\linewidth]{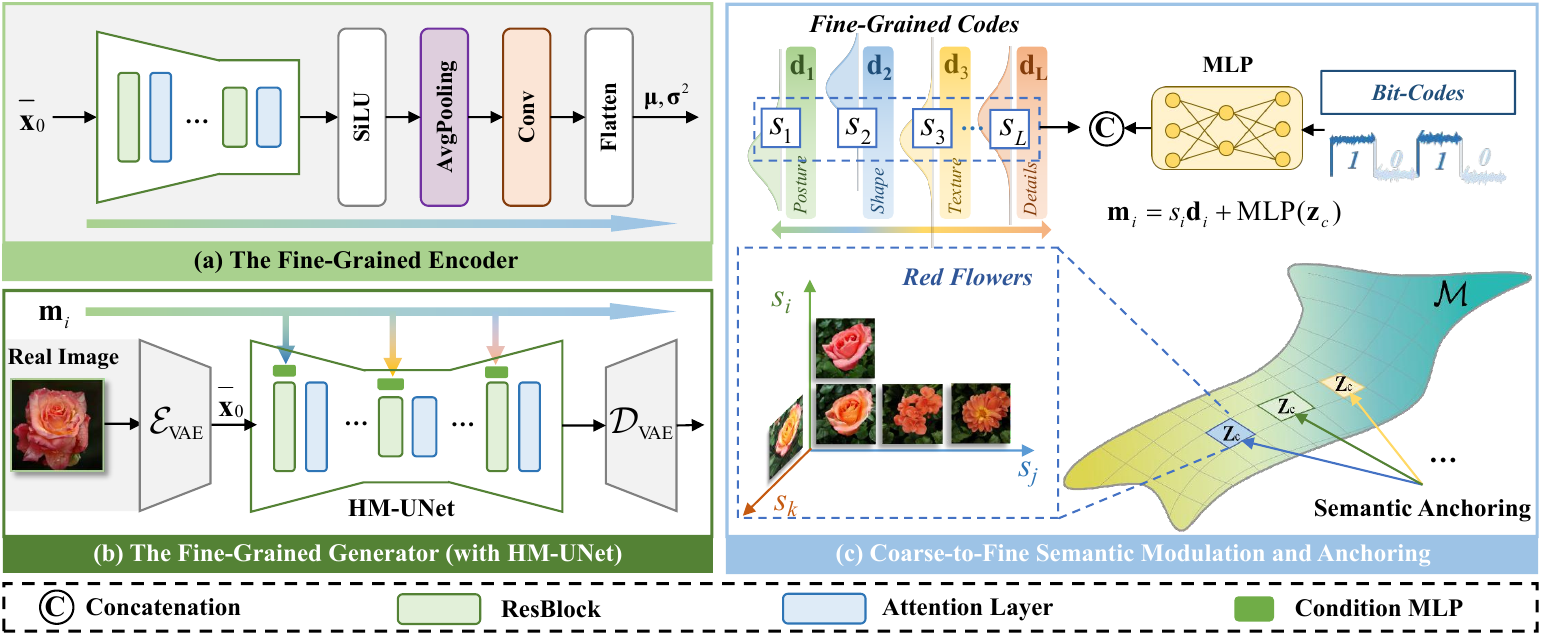}
    \caption{Overview of the training phase of fine-grained framework in our CoFi-UCGen. Note that real images and the encoder (shaded in gray) are utilized exclusively during training to derive self-supervised semantic conditions, and are discarded during inference. (a) The fine-grained semantic encoder extracts fine-grained latent feature ${s}_{1:L}$ by variational Gaussian posterior $\bm{\mu}$ and $\bm{\sigma}^2$, such that the overall optimisation is unified in maximising the ELBO of diffusion models. (b) The hierarchical modulation U-Net (HM-UNet) aggregates the fine-grained latent feature ${s}_{1:L}$ through layer-specific condition multilayer perceptron networks (MLPs), ensuring precise semantic guidance. (c) The hierarchical modulation is established upon the coarse-grained semantics $\mathbf{z}_c$ and then modulates the fine-grained semantics by establishing a semantic basis, which constrains the modulation on a manifold $\mathcal{M}$, along disentangled basis directions to stabilize the generation process.}
    \label{fig_framework2}
    \vspace{-1em}
\end{figure*}

\begin{figure}[t]
    \centering
    \includegraphics[width=.95\linewidth]{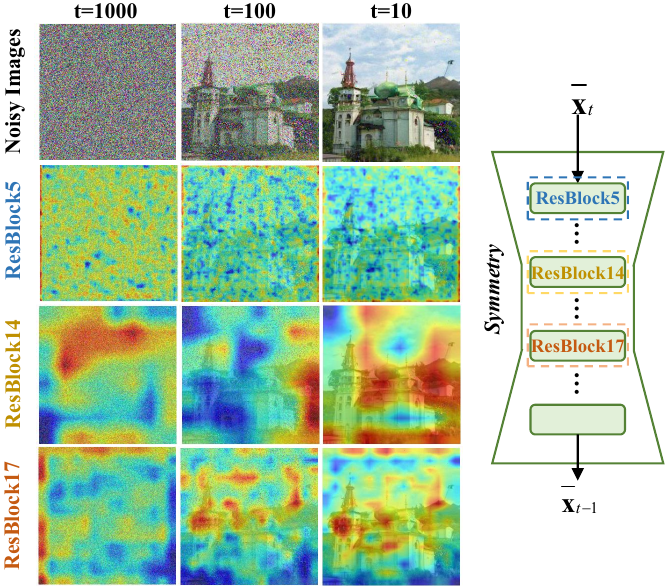}
    \caption{Visualization of the semantic hierarchy in a U-Net-based diffusion model.
Columns show three denoising timesteps ($t=1000, 100, 10$ from left to right), and rows exhibit the noisy inputs and feature heatmaps from 3 representative ResBlocks. Due to the symmetry of the U-Net, we restrict the visualization to the first half of the network. Early ResBlocks (e.g., ResBlock5) mainly respond to local textures and edges, while deeper blocks (ResBlock14, ResBlock17) emphasize object parts and global scene layout. This phenomenon motivates our design of hierarchical modulation, whereby different ResBlocks are conditioned by coarse- and fine-grained semantics.}
    \label{fig_insight}
    \vspace{-1em}
\end{figure}

\subsection{Hierarchical Modulation for Fine-Grained Synthesis}
\label{method:fine_grained}
In our CoFi-UCGen framework, the pre-trained coarse-grained encoder $\mathcal{E}$ establishes a structured and bit-wise disentangled representation $\mathbf{z}_c$, while we propose the fine-grained encoder $\mathcal{E}'$ in this section to control additional latent variables and refine local details. The core principle is that $\mathcal{E}'$ should avoid redundantly re-encoding the coarse semantics already captured by $\mathcal{E}$; instead, it should produce coefficients over a learned \textit{semantic basis} that selectively modulates fine-grained features within the U-Net, which we refer to as HM-UNet. {The detailed architecture and training pipeline of this fine-grained framework are illustrated in Fig.~\ref{fig_framework2}.}

We further propose a new hierarchical modulation upon semantic basis from diffusion models, which achieves the Unprecedented control on the fine-grained semantics without the requirements for any label priors. Our unsupervised fine-grained control is achieved by finding the intrinsic hierarchy of the U-Net architecture, as illustrated in Fig.~\ref{fig_insight}. From this figure, we can conclude that shallow layers primarily encode local textures and noise statistics at early timesteps, while deeper layers progressively capture more global structure and high-level semantic layout at later timesteps. This behaviour motivates a hierarchical strategy that modulates different UNet depths to match semantic granularity for fine-grained control.


More specifically, we denote $\overline{\mathbf{x}}_0$ as the latent representation of real-world images, obtained by a pre-trained variational auto-encoder (VAE) by latent diffusion models \cite{rombach2022high}, whereas $\overline{\mathbf{x}}_{1:T}$ represent the noisy latent features along with the forward diffusion process. Recall that $\mathbf{z}_c = \mathcal{E}(\mathbf{x})$ is the coarse-grained representation. We thus propose the modulation across the residual blocks (ResBlocks) within the diffusion HM-UNet architecture.
We denote the HM-UNet by $L$ ResBlocks $\{\mathrm{ResBlock}_i\}_{i=1}^L$. For each $i$-th block, we propose to learn the semantic basis $\mathbf{d}_i \in \mathbb{R}^d$, where $d$ corresponds to the dimension for the $i$-th ResBlock. The collection $\{\mathbf{d}_i\}_{i=1}^L$ therefore formulates a set of fine-grained semantic directions. Meanwhile, the fine-grained latent vector $\mathbf{s}$, obtained by our fine-grained encoder $\mathcal{E}'$, essentially provides instance-specific coordinates in the semantic basis $\{\mathbf{d}_i\}_{i=1}^L$, thus controlling the fine-grained semantics in our CoFi-UCGen. 

To formulate this, the vector $\mathbf{s}$ is linearly projected to per-block scalars $\{s_i\}_{i=1}^L$ to determine the modulation weights for each ResBlock. The modulation vector $\mathbf{m}_i$ is formulated as:
\begin{equation}
    \mathbf{m}_i = s_i \mathbf{d}_i + \mathrm{MLP}(\mathbf{z}_c),
    \label{eq:modulation_factor}
\end{equation}
where $\mathrm{MLP}(\mathbf{z}_c)$ is a shared coarse-level modulation derived from the coarse code $\mathbf{z}_c$ and broadcasted to all blocks. In other words, $s_i$ reflects the activation strength of the fine-grained semantics along the direction $\mathbf{d}_i$. 

In practice, $\mathbf{m}_i$ is split into scale and shift parameters for the adaptive group normalization (AdaGN) in $\mathrm{ResBlock}_i$:
\begin{equation}
    \begin{aligned}
        \boldsymbol{\gamma_i}, \boldsymbol{\beta_i} &= \mathrm{Split}(\mathbf{m}_i)\\
    \mathrm{AdaGN}(\mathbf{h_i}) &= \boldsymbol{\gamma_i} \odot \frac{\mathbf{h_i} - \boldsymbol{\mu}(\mathbf{h_i})}{\boldsymbol{\sigma}(\mathbf{h_i})} + \boldsymbol{\beta_i},
    \end{aligned}
\end{equation}
where $\mathbf{h_i}$ is the feature map at the $i$-th block; $\boldsymbol{\mu}(\cdot)$ and $\boldsymbol{\sigma}(\cdot)$ are computed per-channel statistics. Therefore, the coarse representation $\mathbf{z}_c$ thus provides a global and block-consistent modulation, while the fine-grained latent feature $\mathbf{s}$ selects and scales semantic basis vectors $\{\mathbf{d}_i\}$ to refine local details in a layer-specific manner.


More importantly, to ensure $\mathbf{s}$ as a valid stochastic latent variable that can be randomly sampled during inference, we impose a variational structure on the encoder $\mathcal{E}'$ when obtaining $\mathbf{s}$. Rather than obtaining $\mathbf{s}$ deterministically, $\mathcal{E}'$ predicts the parameters of an approximate Gaussian posterior
\begin{equation}
    q_{\phi_f}(\mathbf{s}\mid \overline{\mathbf{x}}_0)
    = \mathcal{N}\!\Big(
        \mathbf{s};\,
        \bm{\mu}_{\phi_f}(\overline{\mathbf{x}}_0),\,
        \bm{\sigma}^2_{\phi_f}(\overline{\mathbf{x}}_0)
    \Big),
\end{equation}
where $\phi_f$ denotes the learnable parameters of $\mathcal{E}'$. We then sample $\mathbf{s}$ via the reparametrization trick:
\begin{equation}
    \mathbf{s} = \bm{\mu}_{\phi_f}(\overline{\mathbf{x}}_0)
    + \bm{\sigma}_{\phi_f}(\overline{\mathbf{x}}_0) \odot \boldsymbol{\epsilon}_f,
    \quad \boldsymbol{\epsilon}_f \sim \mathcal{N}(\mathbf{0}, \mathbf{I}),
    \label{eq:repara}
\end{equation}
where $\odot$ denotes element-wise multiplication. During training, this posterior is constrained by a standard normal prior $p(\mathbf{s}) = \mathcal{N}(\mathbf{s}; \mathbf{0}, \mathbf{I})$, allowing the diffusion model $p_{\theta_d}$ to be conditioned on the sampled fine latent $\mathbf{s}$.


Correspondingly, by obtaining $\mathbf{s}$ via \eqref{eq:repara}, we prove in Lemma \ref{lemma:diffusion_ELBO} that the overall optimisation on our fine-grained control is consistent with the training objective of diffusion model, as a novel hierarchical latent variable model. In this way, the joint distribution of the reverse process, conditioned on the coarse code $\mathbf{z}_c$, can be factorizes as:
\begin{equation}
    p_{\theta_d}(\overline{\mathbf{x}}_{0:T}, \mathbf{s} \mid \mathbf{z}_c)
    = p(\mathbf{s}) \, p(\overline{\mathbf{x}}_T) \prod_{t=1}^T p_{\theta_d}(\overline{\mathbf{x}}_{t-1} \mid \overline{\mathbf{x}}_t, \mathbf{z}_c, \mathbf{s}),
    \label{eq:joint_diffusion_s}
\end{equation}
where $\theta_d$ denotes the learnable parameters of diffusion model, $p(\overline{\mathbf{x}}_T) = \mathcal{N}(\overline{\mathbf{x}}_T; \mathbf{0}, \mathbf{I})$ is the standard Gaussian prior for the diffusion model, and $p_{\theta_d}(\overline{\mathbf{x}}_{t-1} \mid \overline{\mathbf{x}}_t, \mathbf{z}_c, \mathbf{s})$ represents the learned denoising transitions.

For forward process, we approximate the intractable posterior using a variational distribution $q$ that factorizes into a fine-grained encoder and a fixed forward diffusion process:
\begin{equation}
    q(\mathbf{s}, \overline{\mathbf{x}}_{1:T} \mid \overline{\mathbf{x}}_0)
    = \underbrace{q_{\phi_f}(\mathbf{s} \mid \overline{\mathbf{x}}_0)}_{\text{Fine-grained Encoder}} \cdot \underbrace{q(\overline{\mathbf{x}}_{1:T} \mid \overline{\mathbf{x}}_0)}_{\text{Forward Process}},
    \label{eq:q_joint}
\end{equation}
where $q(\overline{\mathbf{x}}_{1:T} \mid \overline{\mathbf{x}}_0) = \prod_{t=1}^T q(\overline{\mathbf{x}}_t \mid \overline{\mathbf{x}}_{t-1})$ follows the standard fixed Markov chain in denoising diffusion probabilistic models (DDPMs).

\begin{lemma}
\label{lemma:diffusion_ELBO}
{The evidence lower bound (ELBO) of the conditional log-likelihood $\log p_{\theta_d}(\overline{\mathbf{x}}_0|\mathbf{z}_c)$ consists of a reconstruction term, a diffusion consistency term, and a latent regularization term, which can be formulated as follows:
\begin{equation}
\label{eq:lemma3}
\begin{aligned}
&\log p_{\theta_d}(\overline{\mathbf{x}}_0|\mathbf{z}_c) 
\geq \mathbb{E}_{Q} \Big[\log p_{\theta_d}(\overline{\mathbf{x}}_0 \mid \overline{\mathbf{x}}_1, \mathbf{z}_c, \mathbf{s})\Big] \\
& - \sum_{t=2}^{T} \mathbb{E}_{Q} \Big[ D_{\mathrm{KL}}\big( q(\overline{\mathbf{x}}_{t-1} \mid \overline{\mathbf{x}}_t, \overline{\mathbf{x}}_0) \,\Vert\, p_{\theta_d}(\overline{\mathbf{x}}_{t-1} \mid \overline{\mathbf{x}}_t, \mathbf{z}_c, \mathbf{s}) \big) \Big] \\
& - D_{\mathrm{KL}}\big( q_{\phi_f}(\mathbf{s} \mid \overline{\mathbf{x}}_0) \,\Vert\, p(\mathbf{s}) \big) + \mathrm{Const},
\end{aligned}
\end{equation}
where $Q$ denotes the joint variational posterior $q(\mathbf{s}, \overline{\mathbf{x}}_{1:T}|\overline{\mathbf{x}}_0)$ and $\mathrm{Const}$ collects constant terms independent of learnable parameters.}
\end{lemma}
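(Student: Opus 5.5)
The argument is the standard DDPM variational derivation, with the fine-grained latent $\mathbf{s}$ adjoined to the set of latent variables.

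\textbf{Step 1: Jensen with the joint posterior.} Starting from the factorisation \eqref{eq:joint_diffusion_s}, we marginalise:
\begin{equation*}
\log p_{\theta_d}(\overline{\mathbf{x}}_0|\mathbf{z}_c)=\log\int p_{\theta_d}(\overline{\mathbf{x}}_{0:T},\mathbf{s}|\mathbf{z}_c)\,d\overline{\mathbf{x}}_{1:T}\,d\mathbf{s}.
\end{equation*}
We then multiply and divide by the variational distribution $Q=q(\mathbf{s},\overline{\mathbf{x}}_{1:T}|\overline{\mathbf{x}}_0)$ of \eqref{eq:q_joint}. Jensen's inequality then gives
\begin{equation*}
\log p_{\theta_d}(\overline{\mathbf{x}}_0|\mathbf{z}_c)\geq \mathbb{E}_Q\Big[\log \frac{p(\mathbf{s})\,p(\overline{\mathbf{x}}_T)\prod_{t}p_{\theta_d}(\overline{\mathbf{x}}_{t-1}|\overline{\mathbf{x}}_t,\mathbf{z}_c,\mathbf{s})}{q_{\phi_f}(\mathbf{s}|\overline{\mathbf{x}}_0)\prod_t q(\overline{\mathbf{x}}_t|\overline{\mathbf{x}}_{t-1})}\Big].
\end{equation*}
The log-ratio splits additively into an $\mathbf{s}$-part, $\log p(\mathbf{s})-\log q_{\phi_f}(\mathbf{s}|\overline{\mathbf{x}}_0)$, and a diffusion part.

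\textbf{Step 2: the $\mathbf{s}$-part.} Under \eqref{eq:q_joint}, $\mathbf{s}$ is independent of $\overline{\mathbf{x}}_{1:T}$ given $\overline{\mathbf{x}}_0$. So the expectation of the $\mathbf{s}$-part over $Q$ reduces to an expectation over $q_{\phi_f}(\mathbf{s}|\overline{\mathbf{x}}_0)$ alone, which equals $-D_{\mathrm{KL}}(q_{\phi_f}(\mathbf{s}|\overline{\mathbf{x}}_0)\Vert p(\mathbf{s}))$.

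\textbf{Step 3: the diffusion part.} This is where care is needed, because the reverse kernels depend on $\mathbf{s}$ while the forward chain does not. We use Markovianity and Bayes' rule to write, for $t\geq 2$,
\begin{equation*}
q(\overline{\mathbf{x}}_t|\overline{\mathbf{x}}_{t-1})=\frac{q(\overline{\mathbf{x}}_{t-1}|\overline{\mathbf{x}}_t,\overline{\mathbf{x}}_0)\,q(\overline{\mathbf{x}}_t|\overline{\mathbf{x}}_0)}{q(\overline{\mathbf{x}}_{t-1}|\overline{\mathbf{x}}_0)}.
\end{equation*}
The ratio $q(\overline{\mathbf{x}}_t|\overline{\mathbf{x}}_0)/q(\overline{\mathbf{x}}_{t-1}|\overline{\mathbf{x}}_0)$ telescopes across $t$, leaving the diffusion part as
\begin{equation*}
\log p_{\theta_d}(\overline{\mathbf{x}}_0|\overline{\mathbf{x}}_1,\mathbf{z}_c,\mathbf{s})+\log\frac{p(\overline{\mathbf{x}}_T)}{q(\overline{\mathbf{x}}_T|\overline{\mathbf{x}}_0)}+\sum_{t=2}^T\log\frac{p_{\theta_d}(\overline{\mathbf{x}}_{t-1}|\overline{\mathbf{x}}_t,\mathbf{z}_c,\mathbf{s})}{q(\overline{\mathbf{x}}_{t-1}|\overline{\mathbf{x}}_t,\overline{\mathbf{x}}_0)}.
\end{equation*}
We now take the expectation of each summand under $Q$:
\begin{itemize}
\item For each $t\geq2$, we first integrate over $\overline{\mathbf{x}}_{t-1}$ conditional on $(\overline{\mathbf{x}}_t,\overline{\mathbf{x}}_0,\mathbf{s})$. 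The conditional of $\overline{\mathbf{x}}_{t-1}$ under $Q$ is $q(\overline{\mathbf{x}}_{t-1}|\overline{\mathbf{x}}_t,\overline{\mathbf{x}}_0)$, because $\mathbf{s}$ is independent of the chain. The inner expectation therefore equals $-D_{\mathrm{KL}}(q(\overline{\mathbf{x}}_{t-1}|\overline{\mathbf{x}}_t,\overline{\mathbf{x}}_0)\Vert p_{\theta_d}(\cdot|\overline{\mathbf{x}}_t,\mathbf{z}_c,\mathbf{s}))$, and the outer expectation over $Q$ remains.
\item The term $\mathbb{E}_Q[\log p(\overline{\mathbf{x}}_T)/q(\overline{\mathbf{x}}_T|\overline{\mathbf{x}}_0)]=-D_{\mathrm{KL}}(q(\overline{\mathbf{x}}_T|\overline{\mathbf{x}}_0)\Vert p(\overline{\mathbf{x}}_T))$ involves only the fixed forward process and the standard prior. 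It has no learnable parameters and is absorbed into $\mathrm{Const}$.
\item The remaining term is exactly the reconstruction term $\mathbb{E}_Q[\log p_{\theta_d}(\overline{\mathbf{x}}_0|\overline{\mathbf{x}}_1,\mathbf{z}_c,\mathbf{s})]$.
\end{itemize}

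Collecting the reconstruction term, the KL terms for $t=2,\dots,T$, the $\mathbf{s}$-regulariser from Step 2, and the constant yields \eqref{eq:lemma3}.
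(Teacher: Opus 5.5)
Your proposal is correct and follows essentially the same route as the standard DDPM variational derivation with $\mathbf{s}$ adjoined, which is what the lemma (its proof deferred to Section S-IV) is built on: Jensen over the factorised joint posterior $Q$, the $\mathbf{s}$-part giving $-D_{\mathrm{KL}}(q_{\phi_f}(\mathbf{s}|\overline{\mathbf{x}}_0)\Vert p(\mathbf{s}))$, and Bayes' rule with telescoping on the forward transitions, with $-D_{\mathrm{KL}}(q(\overline{\mathbf{x}}_T|\overline{\mathbf{x}}_0)\Vert p(\overline{\mathbf{x}}_T))$ absorbed into $\mathrm{Const}$. Your explicit check that the conditional of $\overline{\mathbf{x}}_{t-1}$ under $Q$ stays $q(\overline{\mathbf{x}}_{t-1}|\overline{\mathbf{x}}_t,\overline{\mathbf{x}}_0)$ despite the $\mathbf{s}$-dependent reverse kernels is exactly what makes the per-step KL terms come out as stated.
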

\begin{proof}[Proof]
For detailed proofs, please refer to Section S-IV in the Supplementary Material.
\end{proof}

Finally, based on the reciprocal learning strategy as introduced in Section \ref{method:recip}, we are able to maximise the ELBO given in Lemma \ref{lemma:diffusion_ELBO} by a composite of objective functions:
\begin{equation}
    \mathcal{L}_{\text{total}} = \mathcal{L}_{\text{diff}} + \lambda_{\text{KL}} \mathcal{L}_{\text{KL}} + \lambda_{\text{recip}} \mathcal{L}_{\text{recip}},
    \label{eq:total_loss}
\end{equation}
where $\lambda_{\text{KL}}$ and $\lambda_{\text{recip}}$ are balancing hyperparameters; $\mathcal{L}_{\text{diff}}$, $\mathcal{L}_{\text{KL}}$ and $\mathcal{L}_{\text{recip}}$ denote the diffusion loss, latent regularisation and reciprocal loss, as shall be elaborated in the following.
\begin{itemize}
    \item \textbf{Diffusion Loss.} The primary term $\mathcal{L}_{\text{diff}}$ corresponds to the standard noise-prediction objective derived from the variational inference:
\begin{equation}
    \mathcal{L}_{\text{diff}} = \mathbb{E}_{t, \overline{\mathbf{x}}_0, \boldsymbol{\epsilon}, \mathbf{s}} \left[ \| \boldsymbol{\epsilon} - \boldsymbol{\epsilon}_{\theta_d}(\overline{\mathbf{x}}_t, t, \mathbf{z}_c, \mathbf{s}) \|^2 \right],
\end{equation}
where $\overline{\mathbf{x}}_t$ denotes the noisy feature at step $t$ and $\boldsymbol{\epsilon} \sim \mathcal{N}(\mathbf{0}, \mathbf{I})$.
\item \textbf{Latent Regularization.} To ensure the valid sampling of fine-grained semantics, $\mathcal{L}_{\text{KL}}$ enforces the variational posterior to align with the prior:
\begin{equation}
    \mathcal{L}_{\text{KL}} = D_{\mathrm{KL}}\big( q_{\phi_f}(\mathbf{s} \mid \overline{\mathbf{x}}_0) \,\Vert\, \mathcal{N}(\mathbf{0}, \mathbf{I}) \big).
\end{equation}
\item \textbf{Reciprocal Loss.}
The reconstruction term in the ELBO is traditionally modelled as a pixel-wise error. However, as discussed in Section\ref{method:recip}, pixel-level constraints are often insufficient for capturing high-level semantics. Therefore, we instantiate the reciprocal objective $r_z$ in \eqref{eq:recip_loss} within our diffusion process to implicitly maximize the mutual information $I(\mathbf{s}; \overline{\mathbf{x}}_0^{\text{pred}})$. 
During training, the diffusion model predicts the clean image $\overline{\mathbf{x}}_0^{\text{pred}}$ from the noisy input $\overline{\mathbf{x}}_t$ via Tweedie's formula:
\begin{equation}
    \overline{\mathbf{x}}_0^{\text{pred}}(\overline{\mathbf{x}}_t, \mathbf{s}) = \frac{\overline{\mathbf{x}}_t - \sqrt{1-\bar{\alpha}_t}\boldsymbol{\epsilon}_{\theta_d}(\overline{\mathbf{x}}_t, t, \mathbf{z}_c, \mathbf{s})}{\sqrt{\bar{\alpha}_t}}.
\end{equation}
where $\{\alpha_{1:T}\}$ denotes noise schedule. We then re-encode this predicted $\overline{\mathbf{x}}_0^{\text{pred}}$ using the fine encoder $\mathcal{E}'$ (sharing weights with the training encoder) and enforce consistency in the semantic latent space:
\begin{equation}
    \mathcal{L}_{\text{recip}} = \| \mathbf{s} - \bm{\mu}_{\phi_f}(\overline{\mathbf{x}}_0^{\text{pred}}) \|_2^2.
\end{equation}
By minimizing $\mathcal{L}_{\text{recip}}$, we ensure that the generated content $\overline{\mathbf{x}}_0^{\text{pred}}$ faithfully preserves the fine-grained semantics $\mathbf{s}$, thereby achieving the semantic alignment and controllability without relying on pixel-space reconstruction.
\end{itemize}

\begin{table*}[htp]
\centering
\caption{Evaluations on both the coarse- and fine-grained UCGen performances, against the Standord Cars \cite{krause20133d} and UTKFace \cite{zhifei2017cvpr} datasets. Prec., Rec. and Align. denote Precision, Recall and DINO-Aligned, respectively. The best and second-best scores are \best{red} and \second{blue}.}
\label{tab:coarse_fine_cars_utkface}
\resizebox{0.9\textwidth}{!}{%
\setlength{\tabcolsep}{5pt} 
\renewcommand{\arraystretch}{1.25} 
\begin{tabular}{l ccccc cccccc}
\toprule
\multirow{3}{*}{Method} & \multicolumn{11}{c}{\textbf{Stanford Cars}} \\
\cmidrule(lr){2-12}
 & \multicolumn{5}{c}{Coarse-Grained} & \multicolumn{6}{c}{Fine-Grained} \\
\cmidrule(lr){2-6} \cmidrule(lr){7-12}
 & Backbone & FID$\downarrow$ & IS$\uparrow$ & Purity$\uparrow$ & NMI$\uparrow$ & Backbone & FID$\downarrow$ & IS$\uparrow$ & Prec.$\uparrow$ & Rec.$\uparrow$ & Align$\downarrow$ \\
\midrule
ClusterGAN \cite{mukherjee2019clustergan} & BigGAN &64.11  &2.32 & - & - & BigGAN &37.13	&2.53	&0.0256	&0.3442	&0.6319\\
Self-Cond GAN \cite{liu2020diverse}       & BigGAN &40.61  &2.32   & - & - & BigGAN &38.58	&2.47	&0.0025	&0.6115	&- \\
IC-GAN \cite{casanova2021instance}        & BigGAN &41.03  &2.33 & - & - & BigGAN &45.78	&2.59	&0.0326	&0.5544	&- \\
MIC-GANs \cite{ying2021unsupervised}      & BigGAN &69.74  &2.17 & - & - & BigGAN &71.2	&2.27	&0.0012	&0.3427	&0.7465 \\
SG-DM+SimCLR \cite{hu2023self}            & DDPM    &10.37
 &2.48 & - & - & LDM  &10.37	&2.87	&0.4161	&0.6986	&0.5150 \\
SG-DM+DINO \cite{hu2023self}  & DDPM   &9.56 &2.45 & - & - & LDM &11.28 &\second{2.96}	&0.3615	&0.5366	&0.4747 \\
SG-DM+MSN \cite{hu2023self}               & DDPM &9.92
 &\second{2.56} & - & - & LDM &10.82	&2.74	&0.4076	&0.6884	&0.4650 \\
\midrule 
Ours (Noise)    & DDPM &\second{8.65} &\best{2.58} & - & - & LDM  &\second{9.91}	&2.89 &\second{0.4252} &\second{0.7543}	&\second{0.4009} \\
Ours (Repar.) & DDPM &\best{8.15} &2.41 & - & - & LDM &\best{5.98}	&\best{3.01}	&\best{0.6111} &\best{0.8330}	&\best{0.3913}\\
\midrule[1pt]
\multirow{3}{*}{Method} & \multicolumn{11}{c}{\textbf{UTKFace}} \\
\cmidrule(lr){2-12}
 & \multicolumn{5}{c}{Coarse-Grained} & \multicolumn{6}{c}{Fine-Grained} \\
\cmidrule(lr){2-6} \cmidrule(lr){7-12}
 & Backbone & FID$\downarrow$ & IS$\uparrow$ & Purity$\uparrow$ & NMI$\uparrow$ & Backbone & FID$\downarrow$ & IS$\uparrow$ & Prec.$\uparrow$ & Rec.$\uparrow$ & Align$\downarrow$ \\
\midrule
ClusterGAN \cite{mukherjee2019clustergan} & BigGAN &25.31  &2.04 & - & - & BigGAN &25.36	&2.54	&0.3329	&0.5257	&0.4630 \\
Self-Cond GAN \cite{liu2020diverse}       & BigGAN &25.69  &2.17 & - & - & BigGAN &49.92	&2.62	&0.1563	&0.5946	&-\\
IC-GAN \cite{casanova2021instance}        & BigGAN &21.79  &2.31 & - & - & BigGAN &32.66	&2.34	&0.1113	&0.6022	&- \\
MIC-GANs \cite{ying2021unsupervised}      & BigGAN &77.23  &1.69 & - & - & BigGAN &69.54	&2.32	&0.0236	&0.4789	&0.6343 \\
SG-DM+SimCLR \cite{hu2023self}            & DDPM   &12.79 &2.53 & - & - & LDM   &7.51	&2.57	&0.4944	&0.7728	&0.5212 \\
SG-DM+DINO \cite{hu2023self}              & DDPM    &12.42 &2.54 & - & - & LDM &7.98 &2.63	&0.4958	&0.7870	&0.4290\\
SG-DM+MSN \cite{hu2023self}               & DDPM    &12.49
 &2.50 & - & - & LDM &8.12	&\second{2.64}	&0.4782	&\second{0.8030}	&0.4863 \\
\midrule 
Ours (Noise)    & DDPM &\second{11.57}  &\second{2.58}  & - & - & LDM &\second{7.13}	&\second{2.64}	&\best{0.5255}	&0.7547	&\second{0.2680} \\
Ours (Repar.) & DDPM &\best{8.94}  &\best{2.63}  & - & - & LDM &\best{6.12}	&\best{2.65}	&\second{0.5011}	&\best{0.8350}	&\best{0.2637} \\
\midrule[1pt]
\bottomrule
\end{tabular}%
}
\vspace{-1em}
\end{table*}

\begin{table*}[htp]
\centering
\caption{Evaluations on both the coarse- and fine-grained UCGen performances, against the CUB200 \cite{wah2011caltech} and Oxford102-Flowers \cite{Nilsback08} datasets. Prec., Rec. and Align. denote Precision, Recall and DINO-Aligned, respectively. The best and second-best scores are \best{red} and \second{blue}.}
\label{tab:coarse_fine_cub_flowers}
\resizebox{0.9\textwidth}{!}{%
\setlength{\tabcolsep}{5pt} 
\renewcommand{\arraystretch}{1.25} 
\begin{tabular}{l ccccc cccccc}
\toprule
\multirow{3}{*}{Method} & \multicolumn{11}{c}{\textbf{CUB200}} \\
\cmidrule(lr){2-12}
 & \multicolumn{5}{c}{Coarse-Grained} & \multicolumn{6}{c}{Fine-Grained} \\
\cmidrule(lr){2-6} \cmidrule(lr){7-12}
 & Backbone & FID$\downarrow$ & IS$\uparrow$ & Purity$\uparrow$ & NMI$\uparrow$ & Backbone & FID$\downarrow$ & IS$\uparrow$ & Prec.$\uparrow$ & Rec.$\uparrow$ & Align$\downarrow$ \\
\midrule
ClusterGAN \cite{mukherjee2019clustergan} & BigGAN &69.68  &4.41  &0.0261  &0.1348 & BigGAN &77.73 &4.51  &0.0336  &0.5416  &0.6796 \\
Self-Cond GAN \cite{liu2020diverse}       & BigGAN &41.59  &4.33  &0.0413  &0.2462 & BigGAN &38.63  &4.98  &0.0018  &0.6509  &- \\
IC-GAN \cite{casanova2021instance}        & BigGAN &87.09  &3.69  &-  &- & BigGAN &84.83  & 5.75 &0.0016  &0.4166  &- \\
MIC-GANs \cite{ying2021unsupervised}      & BigGAN &77.10 &2.10  &0.0153  &0.1178 & BigGAN &96.27 &4.29  &0.0015  &0.3924  &0.8589 \\
SG-DM+SimCLR \cite{hu2023self}            & DDPM     &32.13 &4.38 &0.0174 &0.1063 & LDM  &18.18	&5.78	&0.4350	&0.5032	&0.8164 \\
SG-DM+DINO \cite{hu2023self}              & DDPM    &29.07 &4.47  &\second{0.0508} &\second{0.2515} & LDM &15.81 &\best{5.92}	&0.4927	&0.6236	&0.7059 \\
SG-DM+MSN \cite{hu2023self}               & DDPM   &32.64 &\best{4.53}  &0.0472 &0.2336 & LDM &15.30	&5.85	&0.4975	&0.5304	&0.6706 \\
\midrule 
Ours (Noise)    & DDPM & \second{15.56}	&\second{4.48}
 & \best{0.0517} & \best{0.2516} & LDM &\second{12.24}	&5.71	&\second{0.5206}	&\second{0.7666}	&\second{0.6051} \\
Ours (Repar.) & DDPM & \best{12.16}	&4.45
 & \best{0.0517} & \best{0.2516} & LDM &\best{7.37}	&\second{5.88}	&\best{0.8912}	&\best{0.9212}	&\best{0.5938} \\
\midrule[1pt]
\multirow{3}{*}{Method} & \multicolumn{11}{c}{\textbf{Oxford102-Flowers}} \\
\cmidrule(lr){2-12}
 & \multicolumn{5}{c}{Coarse-Grained} & \multicolumn{6}{c}{Fine-Grained} \\
\cmidrule(lr){2-6} \cmidrule(lr){7-12}
 & Backbone & FID$\downarrow$ & IS$\uparrow$ & Purity$\uparrow$ & NMI$\uparrow$ & Backbone & FID$\downarrow$ & IS$\uparrow$ & Prec.$\uparrow$ & Rec.$\uparrow$ & Align$\downarrow$ \\
\midrule
ClusterGAN \cite{mukherjee2019clustergan} & BigGAN&39.81  &3.50  &0.0902  &0.3606 & BigGAN & 83.19 & 2.91 & 0.0085 & 0.1503 & 0.5163 \\
Self-Cond GAN \cite{liu2020diverse}       & BigGAN &45.88  &2.71  &0.1409  &0.3228 & BigGAN & 41.21 & 3.65 & 0.0025 & 0.3294 & - \\
IC-GAN \cite{casanova2021instance}        & BigGAN &72.48  &2.82  &-  &-  & BigGAN & 95.98 & 2.49 & 0.0032 & 0.1213 & - \\
MIC-GANs \cite{ying2021unsupervised}      & BigGAN  &65.84  &2.34  &0.0564 &0.2587 & BigGAN & 104.89& 2.57 & 0.0017 & 0.2134 & 0.7984 \\
SG-DM+SimCLR \cite{hu2023self}            & DDPM    &33.86 &3.44 &0.0761 &0.1282 & LDM    & 19.11 & 3.71 & 0.5026 & 0.4368 & 0.8044 \\
SG-DM+DINO \cite{hu2023self}              & DDPM    &26.18 &3.57 &\second{0.2294} &\second{0.4177} & LDM & 17.88 & 3.72 & \best{0.5424} & 0.4688 & 0.6457 \\
SG-DM+MSN \cite{hu2023self}               & DDPM    &28.85 &\second{3.58}  &0.2001 &0.3412 & LDM & 18.80 & 3.71 & 0.4957 & 0.4440 & 0.7017 \\
\midrule 
Ours (Noise)    & DDPM & \second{16.17}	&\best{3.81}
&\best{0.3045}	&\best{0.5147} & LDM & \second{15.41} & \best{3.96} & \second{0.5198} & \second{0.6601} & \best{0.5144} \\
Ours (Repar.) & DDPM & \best{14.63}	&\second{3.58}
 &\best{0.3045}	&\best{0.5147} & LDM & \best{12.60} & \second{3.91} & 0.5091 & \best{0.7624} & \second{0.5157} \\
\bottomrule
\end{tabular}%
}
\vspace{-1em}
\end{table*}

\section{Experiments}
\subsection{Experimental Settings}
\noindent\textbf{Datasets:}
We evaluated the proposed CoFi-UCGen network on a diverse set of datasets, covering synthetic, low-resolution and high-resolution real-world scenarios. To demystify the conditional generation performance, we first performed evaluations on synthetic dataset consisting of $100k$ samples, drawn from the mixture of von Mises-Fisher distributions. The number of clusters is 4, i.e., 4 classes for assessing UCGen, in which the samples are arranged on a ring manifold. 

On the other hand, for real-world image scenarios, our evaluation focused on both coarse-grained and fine-grained UCGen, which requires the capability of identifying subtle intra-class variations and capturing consistent semantic structures within each specific domain. Therefore, we conducted experiments on 4 standard datasets widely recognized for their rich and consistent fine-grained details: Stanford Cars \cite{krause20133d}, UTKFace \cite{zhifei2017cvpr}, CUB200 \cite{wah2011caltech}, and Oxford102-Flowers \cite{Nilsback08}. These datasets are also the \textit{de facto} datasets for fine-grained analysis and classifications \cite{wei2021fine, lin2015bilinear}, which are therefore inherently suitable for verifying the efficacy of both coarse-grained and fine-grained UCGen in assessing complex local attributes (e.g., textures of petals, car models, facial attributes, etc.). \textit{Please note that the labels in the above datasets were only employed for the evaluation, whereas they were absent when training all the comparing methods for UCGen.} Regarding the resolutions, at the coarse-grained stage, all images of datasets were downsampled to $64\times64$ resolution; this is essentially sufficient to capture global semantic layouts for coarse-grained tasks, e.g., categorical classification. For the fine-grained stage, images were kept $256\times256$ resolution for Stanford Cars, UTKFace, and CUB200. To further assess scalability, images were resized to $512\times512$ on Oxford102-Flowers.

\noindent\textbf{Baselines:}
To the best of our knowledge, our CoFi-UCGen is the first work to achieve unsupervised coarse-to-fine conditional generation. In contrast, existing methods primarily focus on coarse-grained control. For fair comparisons, we adopted the state-of-the-art UCGen methods by varying the number of clusters $K$ to approximate different semantic granularities. More specifically, we set $K=32$ (i.e., $2^5$) for coarse-grained evaluation to satisfies the power-of-two assumption in Lemma \ref{lemma: attribute_ortho} and capture global semantics, and $K=256$ for fine-grained evaluation to approximate detailed variability, aligning with our continuous latent formulation.

Representative state-of-the-art methods were selected from two dominant paradigms. Latent factorization approaches include ClusterGAN \cite{mukherjee2019clustergan}, Self-Cond GAN \cite{liu2020diverse}, and MIC-GANs \cite{ying2021unsupervised}, which impose clustering constraints within GAN latent spaces. Self-supervised pseudo-labelling approaches include the GAN-based method IC-GAN \cite{casanova2021instance} and the diffusion-based method SG-DM \cite{hu2023self}, which rely on pretrained feature extractors for conditional guidance. Their official implementations were used when available, and all baselines were retrained by their corresponding default settings.

\noindent\textbf{Evaluation Metrics:}
We employed almost all relevant metrics to comprehensively assess image fidelity, diversity, and semantic consistency at both coarse and fine granularities. For the coarse-grained stage, we report Fréchet Inception distance (FID) \cite{heusel2017gans}, Inception score (IS) \cite{salimans2016improved}, clustering Purity \cite{schutze2008introduction}, and normalized mutual information (NMI) \cite{strehl2002cluster}. FID and IS evaluate perceptual quality and diversity for generation, computed using $5,000$ generated samples to match the validation splits of the fine-grained benchmarks. Purity and NMI are widely employed to measure the coarse-grained accuracy for conditional generation, focusing on cluster alignment inferred from the coarse latent space and ground-truth categories.

For the fine-grained evaluation, the classification-related metrics, i.e., Purity and NMI, may not be qualified for evaluating the fine-grained semantics that essentially approach textual granularity. We thus report the precision and recall to disentangle fidelity and diversity, where precision measures the fraction of generated samples lying on the real data manifold and recall measures coverage of the data distribution \cite{kynkaanniemi2019improved}. To explicitly evaluate fine-grained consistency, we introduce the DINO-Aligned metric, which measures semantic consistency among images generated under the same condition. For each fine-grained semantic condition, we computed the average pairwise cosine distance among embeddings of all generated samples. Lower DINO-Aligned values indicate stronger semantic coherence and more precise fine-grained control.

\textbf{Implementation Details:} 
All models were trained using the Adam optimizer with a learning rate of $1e^{-4}$. At the coarse-grained stage, we employed the BigGAN-style architecture with a latent dimensionality of $300$, followed by DDPM-based diffusion backbone with a base channel width of $96$. For fine-grained generation, diffusion was performed in the latent space of a pretrained KL-VAE, following the latent diffusion paradigm to reduce computational cost. The diffusion model employed base channel widths of 192 for $256\times256$ resolution and 256 for $512\times512$ resolution. All experiments were conducted on two NVIDIA RTX 4090 GPUs, and training the full coarse-to-fine framework took approximately one week. We shall also release our code and pre-trained weights upon the acceptance.

\subsection{Comparisons on Synthetic Dataset}
 \begin{figure}[t]
    \centering
    \includegraphics[width=0.95\linewidth]{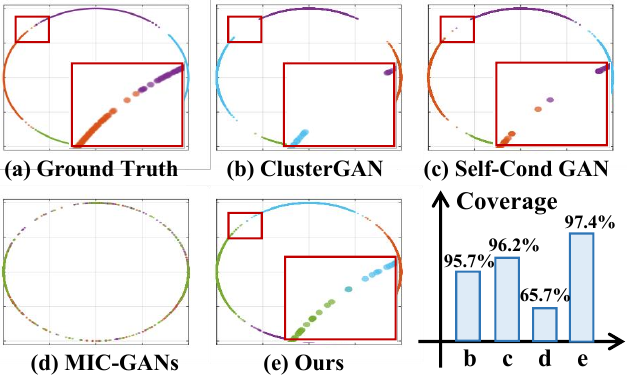}
    \caption{Evaluations on the synthetic dataset, which consists of the mixture of 4 von Mises-Fisher distributions. All generators were trained by 4 fully-connected-layer networks, with hidden size set to 64 and the dimensions of the latent space set to 32. We also employ the Coverage metric as proposed by \cite{naeem2020reliable} to assess the distributional coverage of our model.}
    \label{fig_toydataset}
    \vspace{-1.5em}
\end{figure}
Before evaluating performance on real-world datasets, we first conducted experiments on a synthetic toy dataset to isolate and analyse the behaviour of unsupervised conditional generation under visualisations. The synthetic dataset consisted of $100k$ samples drawn from von Mises-Fisher distributions, forming four clusters with mean directions at $(0, \pi/4, \pi/2, 3\pi/4)$ and a concentration parameter $\kappa=30$, resulting in a 2D ring-shaped manifold.

Since IC-GAN and SG-DM relied on pretrained feature extractors, they were not applicable in this setting and were therefore excluded from the comparison. We instead considered latent factorization-based baselines, including ClusterGAN, Self-Cond GAN, and MIC-GANs, which can be trained from scratch in a fully unsupervised end-to-end manner. Qualitative results for unsupervised conditional generation are reported in Fig.~\ref{fig_toydataset}. From this figure, our CoFi-UCGen method faithfully recovered the underlying cluster structure and generated well-separated conditional samples that closely matched the ground-truth distribution. In contrast, ClusterGAN and Self-Cond GAN tended to produce overly concentrated samples with limited coverage, while MIC-GANs generated intertwined clusters with ambiguous conditional boundaries, suggesting weaker semantic disentanglement.

\begin{figure*}[t]
    \centering
    \includegraphics[width=.95\linewidth]{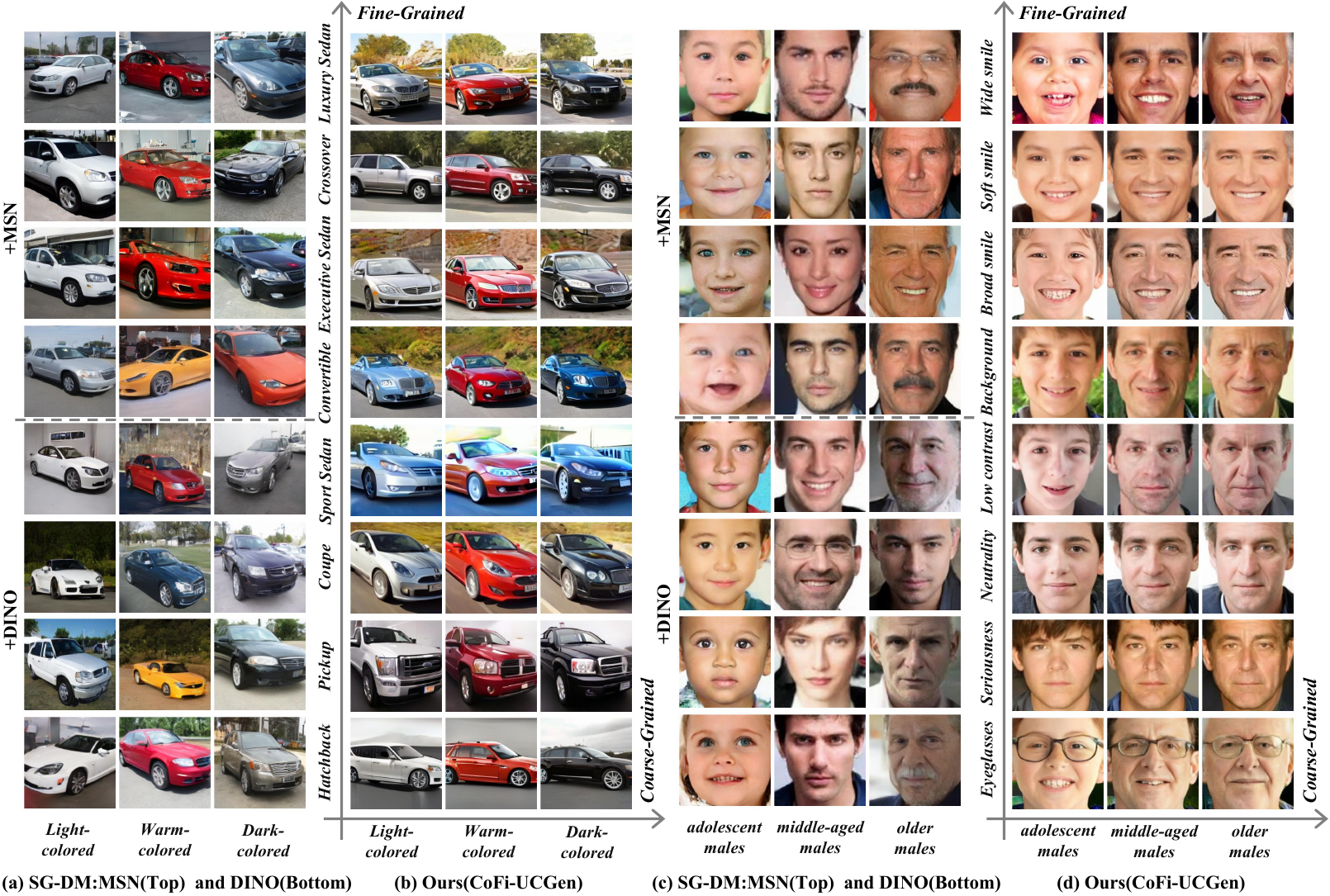}
    \caption{Qualitative comparison of both coarse- and fine-grained UCGen on Stanford Cars (left two panels, $256\times256$ resolution) and UTKFace (right two panels, $256\times256$ resolution). The semantics are best interpreted on two axis, since we do not have access to the ground-truth labels in UCGen. We compare our CoFi-UCGen with the second best method SG-DM, with its two variants, namely, SG-DM+MSN and SG-DM+DINO.}
    \label{fig_cars_utkface}
    \vspace{-1.5em}
\end{figure*}

\begin{figure*}[t]
    \centering
    \includegraphics[width=1\linewidth]{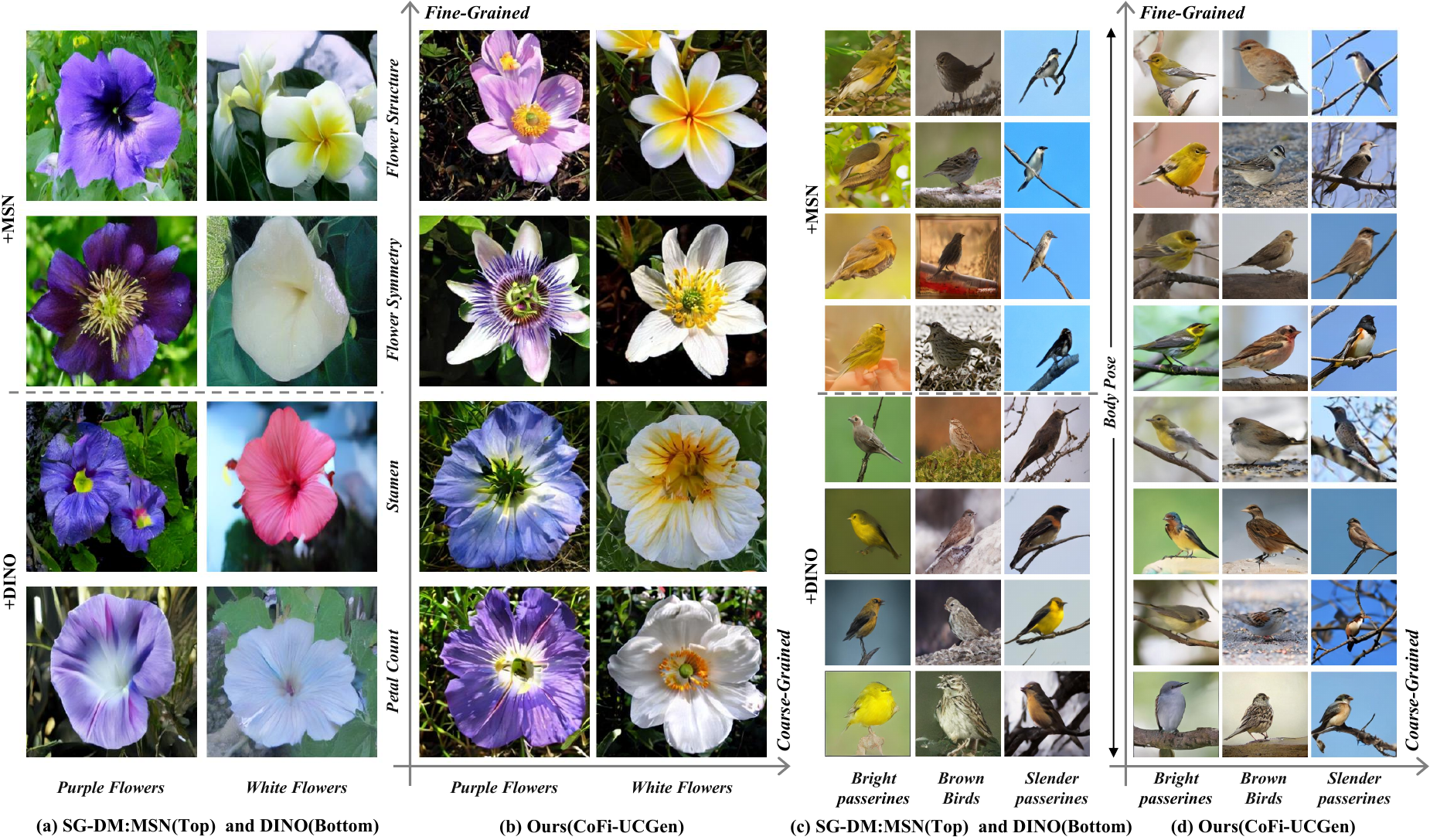}
    \caption{Qualitative comparison of both coarse- and fine-grained UCGen on Oxford102-Flowers (left two panels, $512\times512$ resolution) and CUB200 (right two panels, $256\times256$ resolution). The semantics are best interpreted on two axis, since we do not have access to the ground-truth labels in UCGen. We compare our CoFi-UCGen with the second best method SG-DM, with its two variants, namely, SG-DM+MSN and SG-DM+DINO.}
    \label{fig_flowers_cub200}
    \vspace{-1.5em}
\end{figure*}

\subsection{Quantitative Comparisons}
Table~\ref{tab:coarse_fine_cars_utkface} and Table~\ref{tab:coarse_fine_cub_flowers} report quantitative comparisons of coarse-to-fine image synthesis across the 4 datasets: Stanford Cars, UTKFace, CUB200, and Oxford102-Flowers. We evaluated both the coarse-grained generation stage and the fine-grained diffusion stage using standard metrics that assess perceptual quality, diversity, and semantic consistency.

At the coarse-grained level, our method consistently achieved superior performances across all evaluated datasets. With respect to FID and IS, both variants of the proposed approach outperformed GAN-based baselines as well as diffusion-based UCGen methods, indicating improved global image fidelity and sample diversity. More importantly, on datasets equipped with annotated labels, our model attained the highest Purity and NMI scores, demonstrating that the learned structured latent space effectively captured meaningful global semantics in a fully unsupervised manner. Compared with prior latent factorization methods, such as ClusterGAN and MIC-GANs, the proposed approach yielded more coherent and well-separated latent clusters, whereas diffusion-based UCGen baselines exhibited limited semantic organization at the coarse-grained level.

At the fine-grained stage, again, the proposed hierarchical diffusion framework further enhanced both generative quality and control accuracy. Across all datasets, our method achieved the lowest FID scores, with particularly pronounced improvements on fine-grained benchmarks such as CUB200 and Oxford102-Flowers, indicating high-fidelity synthesis at increased resolutions. In addition, our approach consistently attained higher precision and recall values than competing methods, reflecting superior balance between sample fidelity and distribution coverage. These results suggested that hierarchical modulation effectively alleviated the quality–diversity trade-off commonly observed in prior generative approaches.

Beyond perceptual quality metrics, the proposed method exhibited clear advantages in fine-grained semantic control, as quantified by the proposed DINO-Aligned metric. On all four datasets, our approach achieved the lowest alignment scores, indicating stronger semantic consistency among images generated under identical fine-grained conditions. This improvement was particularly evident on datasets with substantial intra-class variability, such as Stanford Cars and CUB200, highlighting the effectiveness of jointly leveraging a structured coarse latent space and hierarchical fine-grained modulation. Note that Purity and NMI are omitted for the multi-attribute datasets (Stanford Cars, UTKFace) and the instance-guided IC-GAN due to label inapplicability. Additionally, DINO-Aligned is excluded for Self-Cond GAN, as severe mode collapse renders the metric invalid.

Finally, by comparing the last two rows in Tables~\ref{tab:coarse_fine_cars_utkface} and ~\ref{tab:coarse_fine_cub_flowers}, we observe that the reparameterized variant consistently outperformed the noise-based variant across both coarse and fine stages, particularly in terms of FID, recall, and DINO-Aligned scores. This verifies that explicit latent reparameterization facilitates more stable semantic propagation from coarse conditions to fine-grained diffusion, resulting in advanced accurate and controllable image synthesis.

\subsection{Qualitative Comparisons}
Fig.~\ref{fig_cars_utkface} and Fig.~\ref{fig_flowers_cub200} presented qualitative comparisons of coarse-to-fine conditional generation across multiple datasets. In each figure, the horizontal axis corresponded to different coarse-grained conditions, while the vertical axis varied fine-grained conditions within each coarse category. For the best visualisations, we compared the proposed method with two representative competitive baselines that achieved the second-best quantitative performance.

Across all datasets, the proposed CoFi-UCGen network demonstrated a distinction separation between coarse-grained semantics and fine-grained attributes. Along the horizontal direction, the generated images consistently preserved the global semantic category specified by the coarse condition, such as vehicle type, flower colour group, and bird species. Along the vertical direction, fine-grained variations of our CoFi-UCGen were smoothly and consistently controlled, manifesting as changes in attributes such as colour, texture, pose, facial characteristics, and subtle structural details, while maintaining coherence with the corresponding coarse-level semantics. In contrast, the competing methods operated with a single semantic level and therefore did not explicitly distinguish between coarse-grained and fine-grained conditions. As a result, their visualizations varied only along the horizontal axis, without a corresponding vertical dimension for fine-grained control. Although these methods preserved coarse-level semantics to a certain extent, the absence of hierarchical conditioning prevented systematic manipulation of fine-grained attributes.

By further inspecting each row of Fig.~\ref{fig_cars_utkface} and Fig.~\ref{fig_flowers_cub200}, we can also conclude that our CoFi-UCGen method achieved more consistent intra-row semantics and apparent inter-row variations, reflecting stronger fine-grained semantic coherence under fixed coarse conditions. This behaviour was particularly evident on datasets with high intra-class variability, such as Stanford Cars, UTKFace, and CUB200, where subtle fine-grained differences were hard to control without explicit hierarchical modulation. These visual results further verified that the structured coarse latent space served as a stable global semantic anchor, while the hierarchical diffusion mechanism effectively injected fine-grained details without disrupting coarse-level consistency. Overall, the qualitative comparisons demonstrated that the proposed CoFi-UCGen network achieved the best performances on precise, interpretable, and controllable UCGen than existing state-of-the-art baselines.

\begin{table}[t]
    \centering
    \caption{Comparison of representation learning performances on CUB-200 and Flowers datasets, against existing representation learning backbones.The best and second-best scores are \best{red} and \second{blue}.}
    \label{tab:encoder}
    \resizebox{\columnwidth}{!}{%
        \setlength{\tabcolsep}{4pt}
        \renewcommand{\arraystretch}{1.2}
        \begin{tabular}{lc c cc cc}
            \toprule
            \multirow{2}{*}{Method} & \multirow{2}{*}{Backbone} & \multirow{2}{*}{Dim.} & \multicolumn{2}{c}{CUB200} & \multicolumn{2}{c}{Oxford102-Flowers} \\
            \cmidrule(lr){4-5} \cmidrule(lr){6-7}
             & & & Purity$\uparrow$ & NMI$\uparrow$& Purity$\uparrow$& NMI$\uparrow$\\
            \midrule
            
            \multicolumn{7}{l}{\textbf{Supervised Pre-training}} \\
            ResNet-50 &- & 2048 & 0.0457 & 0.2281 & 0.2071 & 0.3296 \\
            DenseNet121 &- &1024 & 0.0460 & 0.2347 & 0.2033 & 0.3261 \\
            VGG19 &- &512 & 0.0417 & 0.2133 & 0.1746 & 0.2815 \\
            
            \midrule
            
            \multicolumn{7}{l}{\textbf{Unsupervised / Self-supervised Pre-training}} \\
            SimCLR \cite{chen2020simple} & ResNet-50 & 2048 & 0.0174 & 0.1063 & 0.0761 & 0.1282 \\
            MAE \cite{he2022masked} & ViT-B/16 & 768 & 0.0320 & 0.1778 & 0.1298 & 0.2583 \\
            DINO \cite{caron2021emerging} & ResNet-50 & 2048 & 0.0411 & 0.2153 & 0.2061 & 0.3204 \\
            DINO \cite{caron2021emerging}& ViT-B/16 & 768 & \second{0.0508} & \second{0.2516} & \second{0.2294} & \second{0.4177} \\
            MSN \cite{assran2022masked}& ViT-B/16 & 768 & 0.0472 & 0.2336 & 0.2001 & 0.3412 \\
            
            \midrule
            
            Ours ($\mathcal{E}$) & ResNet & 300 & \best{0.0517} & \best{0.2516} & \best{0.3045} & \best{0.5147} \\
            \bottomrule
        \end{tabular}%
    }
    \vspace{-1em}
\end{table}

 \begin{figure}[t]
    \centering
    \includegraphics[width=0.95\linewidth]{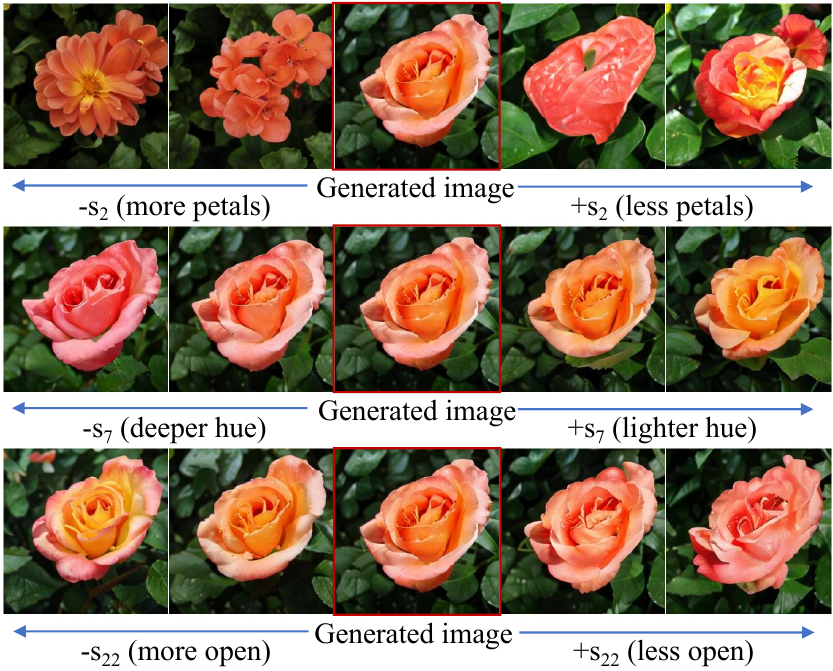}
    \caption{Latent space continuity under hierarchical diffusion modulation. Traversing fine-grained latent factors $\mathbf{s}$ at different diffusion depths induces smooth, semantically meaningful variations in structure, colour, and bloom degree, while preserving coarse-grained semantics (for example, warm-colored flowers in this example), demonstrating stable fine-grained control without altering global semantics.}
    \label{fig_fine_latent}
    \vspace{-1.5em}
\end{figure}

\subsection{Analysis on Learned Representations}
As a ``bonus'', our CoFi-UCGen essentially is able to achieve unsupervised classification, we assessed the representation learning performance of our encoder against a range of state-of-the-art supervised and self-supervised baselines on the CUB200 and Oxford102-Flowers datasets\footnote{We exclude Stanford Cars and UTKFace from this evaluation since they provide multi-attribute annotations rather than a single mutually exclusive class label, which is not directly comparable to standard unsupervised classification protocols.}. As summarized in Table~\ref{tab:encoder}, the comparisons were performed against representative representation learning methods with diverse backbone architectures and embedding dimensionalities. As can be seen from this table, despite operating in a substantially lower-dimensional latent space, our encoder from our CoFi-UCGen consistently achieved the highest Purity and NMI scores on both datasets. Notably, it outperformed not only fully supervised baselines but also large-scale self-supervised models, including ViT-based approaches, by a considerable margin. This performance gain was particularly pronounced on Oxford102-Flowers, indicating that our encoder more effectively captured intrinsic, dataset-specific semantic structures than generic pre-training objectives.

This essentially indicates that jointly learning task-specific representations with a generative objective is more effective than adapting generic and off-the-shelf features for unsupervised semantic discovery. By explicitly optimizing the latent space to facilitate conditional generation, our CoFi-UCGen encoder output discriminative and semantically aligned codes, thereby providing a strong foundation for the subsequent fine-grained diffusion process.

\subsection{Analysis on Learned Latent Space}
To investigate the impact of hierarchical modulation on the semantic variation, we performed linear interpolations along individual dimensions of the fine-grained latent code $\mathbf{s}$, which was injected at different depths of the diffusion mode in our CoFi-UCGen. Fig.~\ref{fig_fine_latent} visualizes the traversal of three representative factors ($s_2, s_7, s_{22}$), revealing a clear relationship between modulation depth and the level of semantic abstraction.

Modulating $s_2$ at early layers induced pronounced structural changes, such as variations in the arrangement and number of petals (Fig.~\ref{fig_fine_latent}, top row), suggesting that shallow layers primarily governed the global geometric layout. In contrast, traversing $s_7$ at intermediate layers mainly affected chromatic attributes (e.g., hue intensity) while preserving the underlying structure. At deeper layers, varying $s_{22}$ resulted in subtle, localized morphological refinements, such as changes in the degree of bloom openness. This progression demonstrated that our framework effectively established an interpretable and semantically ordered latent space, in which the abstraction level of semantic control was orthogonalized by the depth of modulation.

\begin{table}[t]
    \centering
    
    \caption{Ablation study on core components regarding the coarse-grained synthesis in our CoFi-UCGen framework. Please note that the last row depicts the default setting of our CoFi-UCGen network. The best and second-best scores are \best{red} and \second{blue}.}
    \label{tab:ablation_coarse_final}
    \resizebox{\columnwidth}{!}{
        \setlength{\tabcolsep}{6pt} 
        \renewcommand{\arraystretch}{1.2}
        \begin{tabular}{ccc cccc}
            \toprule
            \multicolumn{3}{c}{Components} & \multicolumn{4}{c}{Metrics} \\
            \cmidrule(lr){1-3} \cmidrule(lr){4-7}
            $\mathcal{L}_{\text{recip}}$ & $\mathcal{L}_{\text{aug}}$ & Latent & FID$\downarrow$ & IS$\uparrow$ & Purity$\uparrow$ & NMI$\uparrow$ \\
            \midrule
            \xmark & \cmark & Bit-codes & 76.23 &4.56 &0.0295 &0.1585 \\
            \cmark & \xmark & Bit-codes &51.48 &\best{5.01}	&\second{0.0434}	&\second{0.2281} \\
            \cmark & \cmark & GMM & 95.42 &4.53	&0.0258	&0.1381 \\
            \cmark & \cmark & One-hot &\second{17.52} &4.21	&0.0065	&0.0232 \\
            \midrule
            \cmark & \cmark & Bit-code &\best{15.94}	&\second{4.57} &\best{0.0507}	&\best{0.2516} \\
            \bottomrule
        \end{tabular}
    }

    \vspace{0.4cm} 

    \caption{Ablation study on core components regarding the fine-grained synthesis in our CoFi-UCGen framework. Please note that the last row depicts the default setting of our CoFi-UCGen network.The best and second-best scores are \best{red} and \second{blue}.}
    \label{tab:ablation_fine_final}
    \resizebox{\columnwidth}{!}{
        \setlength{\tabcolsep}{5pt} 
        \renewcommand{\arraystretch}{1.2}
        \begin{tabular}{cccc ccccc}
            \toprule
            \multicolumn{4}{c}{Components} & \multicolumn{5}{c}{Metrics} \\
            \cmidrule(lr){1-4} \cmidrule(lr){5-9}
           $\mathcal{L}_{\text{recip}}$ & $\mathcal{L}_{\text{KL}}$ & Hier. & Enc.& FID$\downarrow$ & IS$\uparrow$ & Prec.$\uparrow$ & Rec.$\uparrow$ & Align$\downarrow$ \\
            \midrule
            \xmark & \cmark & \cmark & $\mathcal{E}$ & 15.58 & 4.98 & 0.3745 & 0.6456 & 0.6133 \\
            \cmark & \xmark & \cmark & $\mathcal{E}$ & 137.21 & 4.06 & 0.0064 & 0.3325 &\best{0.1042}  \\
            \cmark & \cmark & \xmark & $\mathcal{E}$ &25.58  &\second{5.30} &\second{0.5143} &0.6926  &0.7330 \\
            \cmark & \cmark & \cmark & DINO &\second{12.37} &5.19 &0.4333 &\second{0.6977} &0.6138 \\
            \midrule
            \cmark & \cmark & \cmark & $\mathcal{E}$ & \best{12.24} & \best{5.71} & \best{0.5206} & \best{0.7666} & \second{0.6051} \\
            \bottomrule
        \end{tabular}
    }
\end{table}

\subsection{Ablation Study}

\noindent\textbf{Ablations on Coarse-grained UCGen:} Since our CoFi-UCGen is able to achieve both coarse- and fine-grained UCGen, we first ablated the core components regarding the coarse-grained stage, namely, using the semantic reciprocal loss $\mathcal{L}_{\text{recip}}$ to retain semantic consistency, employing the contrastive regularization $\mathcal{L}_{\text{aug}}$ to further improve the distinct semantics, together with the latent space topology (in which we investiagted our bit-codes and we also ablated the other variants). For the variants of latent space topology, we selected the most representative priors employed in end-to-end UCGen systems: the GMM topology \cite{GMMGAN, ying2021unsupervised} and the One-hot topology \cite{mukherjee2019clustergan, InfoGAN}. We report the ablation results in Table~\ref{tab:ablation_coarse_final}. From this table, we can find that removing either $\mathcal{L}_{\text{recip}}$ or $\mathcal{L}_{\text{aug}}$ lead to evident degradation in both generative quality (FID) and semantic separability (Purity/NMI). This verifies that these two terms play complementary roles, namely, $\mathcal{L}_{\text{aug}}$ encourages separability in the latent space, while $\mathcal{L}_{\text{recip}}$ ensures consistency between these latent structures and the synthesized image manifold.

Regarding the latent structure, while alternative formulations can also capture partial semantics, they consistently achieved inferior performances compared with our bit-code design, when building up the latent space of our CoFi-UCGen.
More specifically, the \textit{GMM} architecture exhibited optimization instability. Enforcing separability among numerous clusters forced the Gaussian means to diverge significantly, impairing training robustness and sensitivity to initialization. On the other hand, the \textit{One-hot} design essentially separated the one-hot pseudo-labels from generation noise, thus suffering from the deficiency on control accuracy.  In contrast, our \textit{bit-codes} provided a superior aggregation of discrete and continuous information, enabling stable optimization and achieving superior semantic clustering performance.

\noindent\textbf{Ablations on Fine-grained UCGen:}
For the fine-grained stage, we further ablated the effectiveness of 4 core components. More specifically, the semantic reciprocal loss $\mathcal{L}_{\mathrm{recip}}$ also acts as the core in our fine-grained UCGen, which was further ablated. The proposed HM-UNet (denoted by Hier.) was also verified as one of the most important module to achieve accurate fine-grained control for UCGen. Moreover, we also ablated our KL regularisation $\mathcal{L}_{\mathrm{KL}}$ for latent semantic smoothness, together with our end-to-end learned coarse encoder $\mathcal{E}$ against the pretrained DINO backbone that was dominantly employed in UCGen \cite{caron2021emerging}. The results are listed in Table \ref{tab:ablation_fine_final}. 

From Table \ref{tab:ablation_fine_final}, we can find that without the reciprocal loss, our CoFi-UCGen witnessed degraded performance across all metrics, validating that explicit image and latent alignment is necessary for learning semantically consistent and complete latent representations. Moreover, removing the hierarchical modulation mechanism lead to a noticable drop in perceptual quality, verifying that injecting conditions at multiple network depths is also critical for properly disentangling and rendering multi-scale semantics during generation. Notably, the variant without KL regularization appeared to achieve a better DINO-Aligned score. However, qualitative inspection revealed that this improvement may be misleading. In other words, the latent space became collapsed and discontinuous, causing the model to generate degenerate and noise-like outputs rather than meaningful images, leading to the extremely inferior performance regarding other metrics. The KL term is therefore indispensable for maintaining a valid and smooth sampling manifold that supports coherent generation. 

Finally, replacing our task-oriented coarse encoder with a generic pretrained DINO backbone also resulted in noticeable declines in precision and recall values. This finding reveals that generic recognition features, while effective for discriminative tasks, may be sub-optimal for generation. In contrast, our jointly learned encoder  provided more comprehensive and structurally aligned guidance, enabling the diffusion model to better preserve content fidelity and semantic controllability.

\section{Conclusion}
In this paper, we have addressed the fundamental challenge of unsupervised conditional image generation (UCGen), and introduced a novel coarse-to-fine UCGen (CoFi-UCGen) method that explicitly disentangles global semantics from fine-grained variations, constituting the first effective attempt to support both coarse- and fine-grained conditional generation in a fully unsupervised setting. More specifically, we first introduced an adversarial semantic reciprocal learning theory to enforce semantic consistency and completeness between the image space and the latent space, forming a principled foundation for controllable generation. Leveraging this consistency, we proposed bit-codes to construct a structured coarse-grained latent space, and theoretically established that these codes capture distinct global semantics while still allowing independent noise sampling to preserve diversity. Building on the learned coarse semantics, we further proposed a fine-grained semantic basis and integrated a hierarchical modulation mechanism into diffusion models, enabling layer-wise injection of coarse conditions to progressively steer fine-grained attributes during generation. Extensive experiments verified that without any label priors or pre-trained feature extractors, CoFi-UCGen consistently surpasses existing UCGen methods in image quality, semantic consistency, and control accuracy. These results validate the central premise of our CoFi-UCGen method, namely, explicit coarse-to-fine semantic decomposition is an effective and scalable strategy for addressing the controllability bottleneck in UCGen, thus opening a practical path toward label-free conditional generation with accurate and multi-granular control.

\small
\bibliographystyle{IEEEbib}
\bibliography{ref}
\begin{IEEEbiography}[{\includegraphics[width=1in,height=1.25in,clip,keepaspectratio]{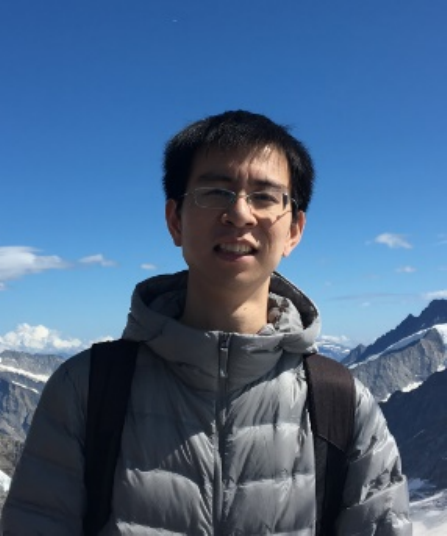}}]{Shengxi Li}
(Member, IEEE) received the Ph.D. degree in electrical and electronic engineering from Imperial College London, London, U.K., in 2021. He is currently a Professor with the School of Electronic Information Engineering, Beihang University, Beijing, China. His research interests mainly include generative models, statistical signal processing, and machine learning. He was the recipient of Young Investigator Award of International Neural Network Society.
\end{IEEEbiography}

\begin{IEEEbiography}[{\includegraphics[width=1in,height=1.25in,clip,keepaspectratio]{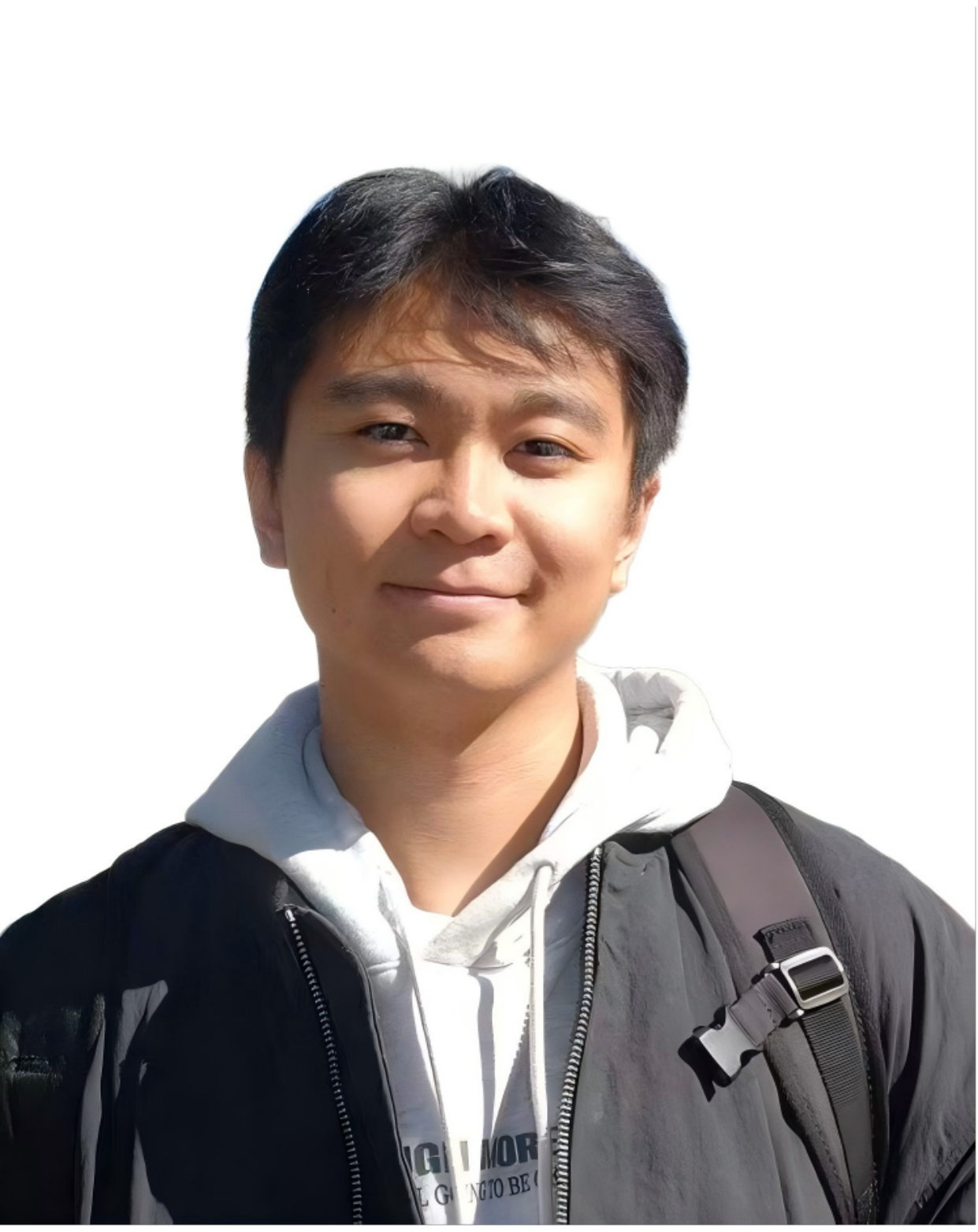}}]{Zhaokun Hu}(Student Member, IEEE) 
received the B.S. degree from the School of Electronic and Information Engineering, Beihang University, Beijing, China, in 2024. He is currently pursuing the MS degree with the School of Electronic and Information Engineering, Beihang University. His research interests mainly include machine learning and generative models.
\end{IEEEbiography}

\begin{IEEEbiography}[{\includegraphics[width=1in,height=1.25in,clip,keepaspectratio]{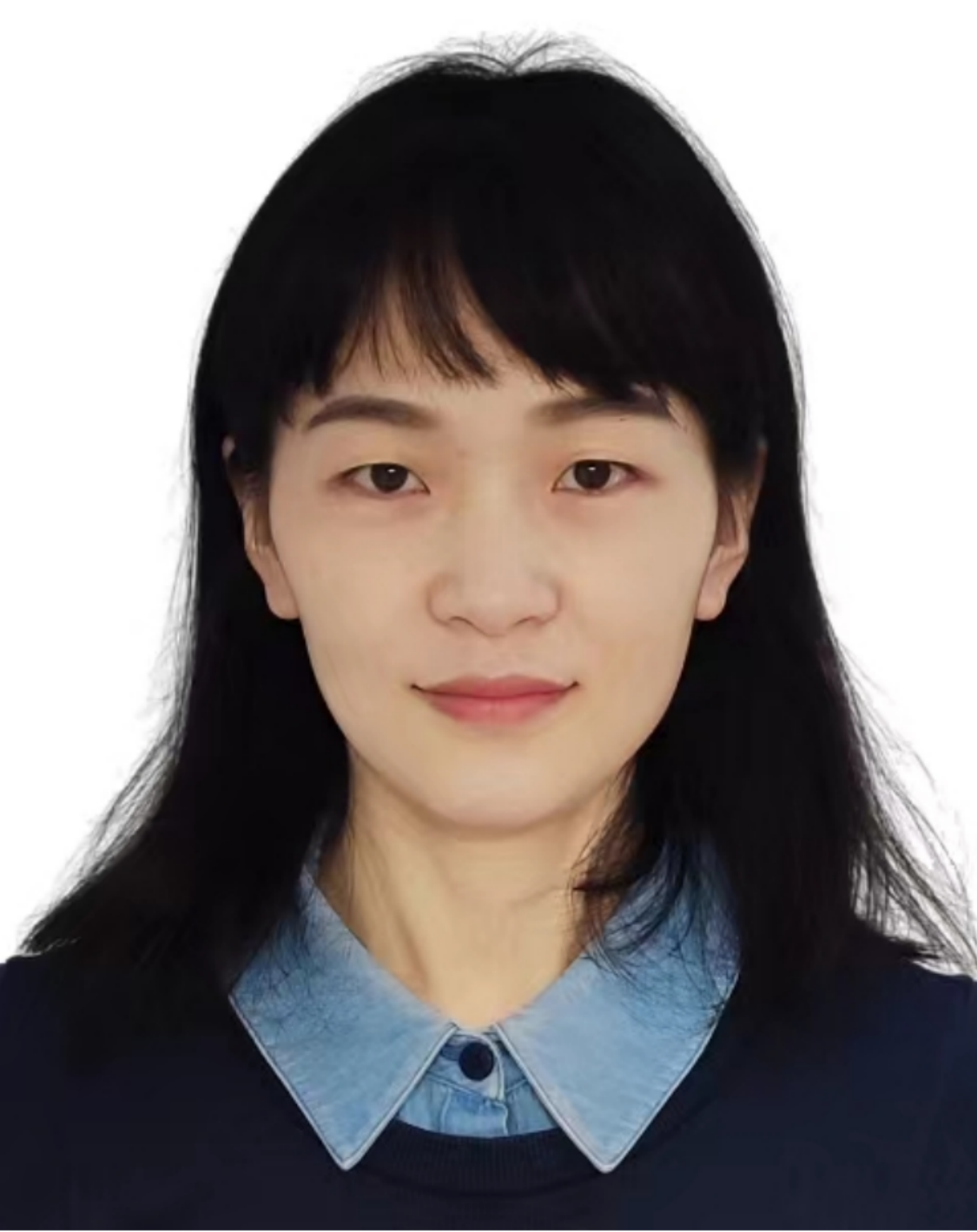}}]{Ce Zheng} is currently pursuing a Ph.D. at Beihang University. She graduated from the University of Chinese Academy of Sciences with a master’s degree. Her research interests include data mining, network measurements, video compression, and large language models.
\end{IEEEbiography}

\begin{IEEEbiography}[{\includegraphics[width=1in,height=1.25in,clip,keepaspectratio]{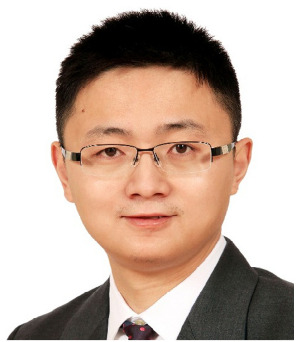}}]{Mai Xu}(Senior Member, IEEE) received the B.S. from Beihang University, Beijing, China, in 2003, an M.S. from Tsinghua University, Beijing, China, in 2006, and a Ph.D. from Imperial College London, London, U.K., in 2010. From 2010 to 2012, he was a Research Fellow with the Department of Electrical Engineering, Tsinghua University. Since January 2013, he has been with Beihang University, where he was an Associate Professor and was promoted to Full Professor in 2019. From 2014 to 2015, he was a Visiting Researcher at MSRA. His main research interests include image processing and computer vision. He has authored or coauthored more than 200 technical papers in international journals and conference proceedings, e.g., IJCV, IEEE TPAMI, TIP, J-STSP, CVPR, ICCV, ECCV, and AAAI. He is the recipient of the best/top paper awards of IEEE/ACM conferences, such as ACM MM. He served as an Associate Editor of IEEE TIP and IEEE TMM, a Lead Guest Editor of IEEE J-STSP, and an Area Chair or TPC Member for many conferences, such as ICME, AAAI, etc. He received outstanding AE awards twice (Years 2021 and 2022). He is an elected member of the Multimedia Signal Processing Technical Committee, IEEE Signal Processing Society.
\end{IEEEbiography}

\begin{IEEEbiography}[{\includegraphics[width=1in,height=1.25in,clip,keepaspectratio]{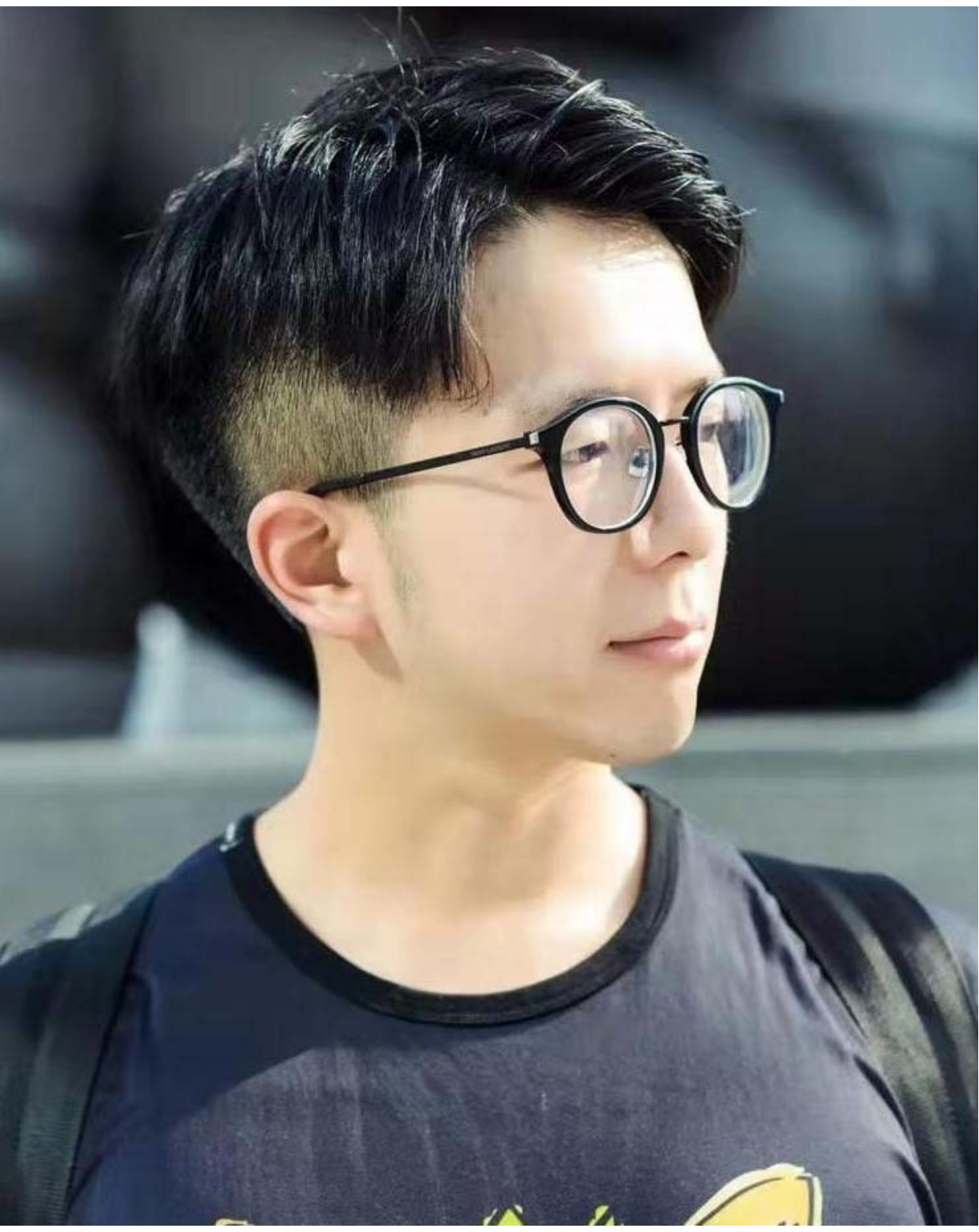}}]{Jingyuan Xia}(Member, IEEE)
is currently an Associate Professor with the College of the Electronic Science, the National University of Defense Technology (NUDT). His current research interests include low-level image processing, nonconvex optimization, and machine learning for signal processing.
\end{IEEEbiography}

\begin{IEEEbiography}
[{\includegraphics[width=1in,height=1.25in,clip,keepaspectratio]{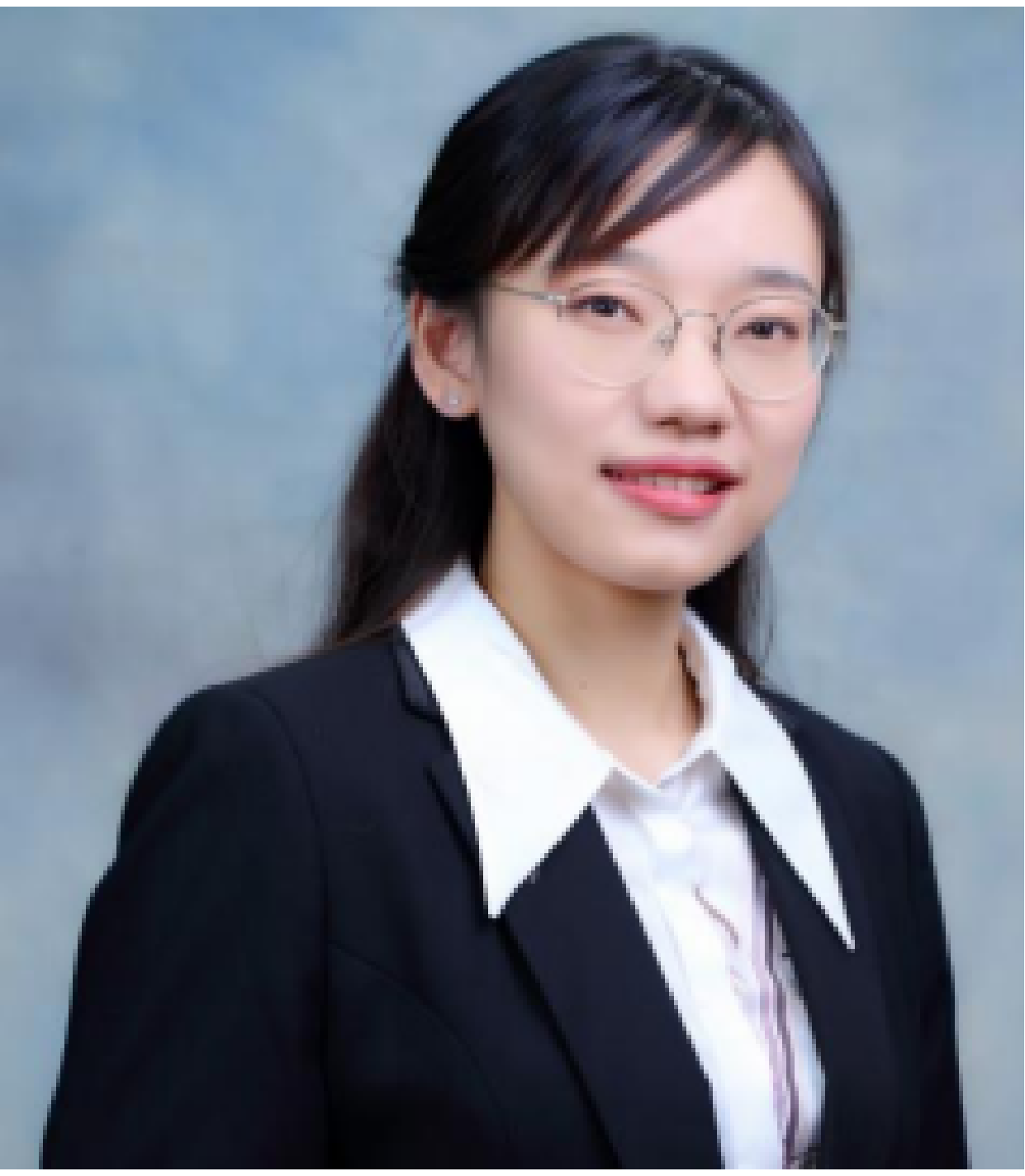}}]{Si Liu}(Member, IEEE) is currently a professor at Beihang University. She received her Ph.D. degree from Institute of
Automation, Chinese Academy of Sciences. She has been a research assistant and Postdoc in National
University of Singapore. Her research interest includes computer vision and multimedia analysis. She has published over 70 cutting-edge papers on human-related analysis and vision-language understating. She was the recipient of Best Paper Award of ACM MM 2021 and 2013, Best Demo Award of ACM MM 2012. She was the Champion of CVPR 2017 Look Into Person Challenge and the organizer of the ECCV 2018, ICCV 2019, CVPR 2021, CVPR 2022 and ACM MM 2022 Person in Context Challenges. She is the Associate Editor of IEEE TMM and TCSVT.
\end{IEEEbiography}
\vfill

\end{document}